\tikzset{
    %Define standard arrow tip
    >=stealth',
    %Define style for arguments
    args/.style={circle,draw=black}
}
\newcommand{\powerset}[1]{\ensuremath{2^{#1}}}		% power set
\newcommand{\AF}{\ensuremath{\mathsf{AF}}\xspace}						% Abstract argumentation framework
\newcommand{\arguments}{\ensuremath{\mathsf{Arg}}\xspace}				% Set of arguments
\newcommand{\attacks}{\ensuremath{\rightarrow}\xspace}				% Attack relation
\newcommand{\attackers}[2]{\ensuremath{\mathsf{Att}_{#1}(#2)\xspace}}				% Attacker set
\newcommand{\AFcomplete}{\ensuremath{\AF=(\arguments,\attacks)}\xspace}		% Definition of an abstract argumentation framework
\newcommand{\cA}{\ensuremath{\mathcal{A}}}
\newcommand{\cB}{\ensuremath{\mathcal{B}}}
\newcommand{\cC}{\ensuremath{\mathcal{C}}}
\newcommand{\argin}{\ensuremath{\mathsf{in}}}		% argument is in
\newcommand{\argout}{\ensuremath{\mathsf{out}}}		% argument is out
\newcommand{\argundec}{\ensuremath{\mathsf{undec}}}		% argument is undec
\newcommand{\reallyAllProb}{\ensuremath{\mathcal{P}}}
\newcommand{\allProb}[1]{\ensuremath{\mathcal{P}(#1)}}
\newcommand{\allProbT}[2]{\ensuremath{\mathcal{P}_{#1}(#2)}}
\newcommand{\allProbComp}[2]{\ensuremath{\mathcal{P}^{#2}(#1)}}
\newcommand{\dom}[1]{\ensuremath{\mathsf{dom}\,#1}}
\newtheorem{definition}{Definition}
\newtheorem{example}{Example}
\newtheorem{proposition}{Proposition}
\title{Probabilistic Argumentation with Epistemic Extensions and Incomplete Information}
\author{Anthony Hunter$^{\dagger,1}$ \and Matthias Thimm$^{\ddagger,2}$\\~\\[-1ex] \small $^{\dagger}\texttt{anthony.hunter@ucl.ac.uk}$, $^{\ddagger}\texttt{thimm@uni-koblenz.de}$\\ \small$^{1}$Department of Computer Science, University College London, United Kingdom\\[-1ex]\small$^{2}$Institute for Web Science and Technologies (WeST), University of Koblenz-Landau}
\date{\today}
\begin{document}
\maketitle

%%%%%%%%%%%%%%%%%%%%%%%%%%%%%%%%%%%%%%%%%%%%%%%%%

\begin{abstract}
Abstract argumentation offers an appealing way of representing and evaluating arguments and counterarguments. This approach can be enhanced by a probability assignment to each argument. There are various interpretations that can be ascribed to this assignment. In this paper, we regard the assignment as denoting the belief that an agent has that an argument is justifiable, i.\,e., that both the premises of the argument and the derivation of the claim of the argument from its premises are valid. This leads to the notion of an epistemic extension which is the subset of the arguments in the graph that are believed to some degree (which we defined as the arguments that have a probability assignment greater than 0.5). We consider various constraints on the probability assignment. Some constraints correspond to standard notions of extensions, such as grounded or stable extensions, and some constraints give us new kinds of extensions. 
\end{abstract}

%%%%%%%%%%%%%%%%%%%%%%%%%%%%%%%%%%%%%%%%%%%%%%%%%
\section{Introduction}
%%%%%%%%%%%%
Abstract argumentation, as proposed by Dung \cite{Dung:1995}, provides a simple and appealing representation in the form of a directed graph. Each node denotes an argument and each arc denotes one argument attacking another. Abstract argumentation also provides a set of options for determining which arguments can be accepted together (i.\,e. an extension), and which arguments can be rejected. 
Recently there has been interest in augmenting abstract argumentation with a probabilistic assignment to each argument \cite{DT10,LON11,Hun12b,Thimm:2012}. Once we introduce a probability assignment to each argument, we have extra information about the argumentation, and this means we can refine the evaluation of an argument graph. In most of these proposals (i.\,e. \cite{DT10,LON11,Hun12b}), the emphasis is on what should the structure of the graph be. So the probability of the argument denotes the degree to which the argument should be in the graph. 

In this paper, we take a different approach. We regard the assignment as denoting the belief that an agent has that an argument is justifiable, i.\,e., that both the premises of the argument and the derivation of the claim of the argument from its premises are valid.  So for a probability function $P$, and an argument $\cA$, 
$P(\cA) > 0.5$ denotes that the argument is believed (to the degree given by $P(\cA)$), $P(\cA) < 0.5$ denotes that the argument is disbelieved (to the degree given by $P(\cA)$), and $P(\cA) = 0.5$ denotes that the argument is neither believed or disbelieved. 
This approach leads to the notion of an epistemic extension: This is the subset of the arguments in  the graph that are believed to some degree (i.\,e. the arguments such that $P(\cA) > 0.5$). Since this is a very general idea, our aim in this paper is to consider various properties (i.\,e. constraints) that hold for classes of probability functions, and for the resulting epistemic extensions. We structure our presentation on two views as follows:
\begin{description}
	\item[Standard view] on using probability of arguments.  In this view, we provide properties for the probability function that ensure that the epistemic extensions coincide with Dung's definitions for extensions. Key properties include \emph{coherence} (if $\cA$ attacks $\cB$, then $P(\cA) \leq 1 - P(\cB)$) and \emph{foundation} (if $\cA$ has no attackers, then $P(\cA) = 1$). The advantage of using a probability function instead of Dung's definitions is that we can also specify the degree to which each argument is believed.
	\item[Non-standard view] on using probability of arguments. In this view, we consider alternative properties for the probability function. This means that the resulting epistemic extensions may not coincide with Dung's definitions for extensions.
\end{description}

The framework that we present in this paper is appealing theoretically as it provides further insights into semantics for abstract argumentation, and it offers a finer-grained representation of uncertainty in arguments. Perhaps more importantly, our framework for probability functions is appealing practically because we can now model how audiences judge argumentation. Consider for example how a member of the audience of a debate hears arguments and counterarguments, but is unable (or does not want) to express arguments. Here it is natural to consider how that member of the audience considers which arguments she believes, thereby constructing an epistemic extension. More generally, if we want to make computational models of argument where we can capture persuasion, we need to take account of the belief that participants or audiences have in the individual arguments that are posited.  We see the non-standard view as being particularly important in addressing this need.

This paper builds on previous work \cite{Thimm:2012,Hunter:2013} but extends it in several directions. In particular, the contributions of this paper are as follows:
\begin{enumerate}
	\item We investigate the notion of epistemic extensions and show their usefulness with respect to classical extensions (Section~\ref{section:epistemicextensions}).
	\item We adopt and significantly extend properties for \emph{standard epistemic extensions} from \cite{Thimm:2012,Hunter:2013} and show that these probabilistic concepts coincide with their corresponding concepts from abstract argumentation (Section~\ref{section:standard}).
	\item We introduce non-standard epistemic extensions and a corresponding set of properties as a means to extend the standard view and provide a complete picture of the relationships between our different probabilistic properties (Section~\ref{section:nonstandard}).
	\item We apply the notion of epistemic extensions to the problem of completing partial probability assignments and show the feasibility of this approach (Section~\ref{section:partial}).
\end{enumerate}

%%%%%%%%%%%%%%%%%%%%%%%%%%%%%%%%%%%%%%%%%%%%%%%%%
\section{Preliminaries}
%%%%%%%%%%%%
\label{section:preliminaries}

An \emph{abstract argumentation framework} $\AF$ is a tuple $\AF=(\arguments,\attacks)$ 
where \arguments is a set of arguments and \attacks is a relation $\attacks\subseteq \arguments\times\arguments$. For two arguments $\cA,\cB\in\arguments$ the relation $\cA \attacks \cB$ means that argument $\cA$ attacks argument $\cB$. For $\cA\in\arguments$ define $\attackers{\AF}{\cA}=\{\cB\mid \cB\attacks \cA\}$.

Semantics are given to abstract argumentation frameworks by means of extensions \cite{Dung:1995} or labellings \cite{Wu:2010}. In this work, we use the latter. A labelling $L$ is a function $L:\arguments\rightarrow \{\argin,\argout,\argundec\}$ that assigns to each argument $\cA\in\arguments$ either the value \argin, meaning that the argument is accepted, \argout, meaning that the argument is not accepted, or \argundec, meaning that the status of the argument is undecided. Let $\argin(L)=\{\cA\mid L(\cA)=\argin\}$ and $\argout(L)$ resp.\ $\argundec(L)$ be defined analogously. The set $\argin(L)$ for a labelling $L$ is also called \emph{extension}. A labelling $L$ is called \emph{conflict-free} if for no $\cA,\cB\in\argin(L)$ we have that $\cA\rightarrow \cB$.

Arguably, the most important property of a semantics is its admissibility. A labelling $L$ is called \emph{admissible} if and only if for all arguments $\cA\in\arguments$
\begin{enumerate}
	\item if $L(\cA)=\argout$ then there is $\cB\in\arguments$ with $L(\cB)=\argin$ and $\cB\attacks\cA$, and
	\item if $L(\cA)=\argin$ then $L(\cB)=\argout$ for all $\cB\in\arguments$ with $\cB\attacks \cA$,
\end{enumerate}
and it is called \emph{complete} if, additionally, it satisfies
\begin{enumerate}
	\setcounter{enumi}{2}
	\item if $L(\cA)=\argundec$ then there is no $\cB\in\arguments$ with $\cB\attacks\cA$ and $L(\cB)=\argin$ and there is a $\cB'\in\arguments$ with $\cB'\attacks\cA$ and $L(\cB')\neq\argout$.
\end{enumerate}
The intuition behind admissibility is that an argument can only be accepted if there are no attackers that are accepted and if an argument is not accepted then there has to be some reasonable grounds. The idea behind the completeness property is that the status of an argument is only \argundec\ if it cannot be classified as $\argin$ or $\argout$. Different types of classical semantics can be phrased by imposing further constraints. Let \AFcomplete be an abstract argumentation framework and $L:\arguments\rightarrow \{\argin,\argout,\argundec\}$ a complete labelling. Then
\begin{itemize}
	\item $L$ is \emph{grounded} if and only if $\argin(L)$ is minimal,
	\item $L$ is \emph{preferred} if and only if $\argin(L)$ is maximal,
	\item $L$ is \emph{stable} if and only if $\argundec(L)=\emptyset$, and 
	\item $L$ is \emph{semi-stable} if and only if $\argundec(L)$ is minimal.
\end{itemize}
All statements on minimality/maximality are meant to be with respect to set inclusion.
Note that a grounded labelling is uniquely determined and always exists \cite{Dung:1995}.

%%%%%%%%%%%%%%%%%%%%%%%%%%%%%%%%%%%%%%%%%%%%%%%%%
\section{Epistemic extensions}\label{section:epistemicextensions}
%%%%%%%%%%%%%%%%
We now go beyond classical three-valued semantics of abstract argumentation and turn to probabilistic interpretations of the status of arguments. Let $\powerset{\mathcal{X}}$ denote the power set of a set $\mathcal{X}$. A \emph{probability function} $P$ on some finite set $\mathcal{X}$ is a function $P: \powerset{\mathcal{X}}\rightarrow [0,1]$ with $\sum_{X\subseteq \mathcal{X}}P(X) = 1$. Let $P:\powerset{\arguments}\rightarrow [0,1]$ be a probability function on $\arguments$. We abbreviate
\begin{align*}
	P(\cA) & = \sum_{\cA\in E \subseteq \arguments} P(E)
\end{align*}
This means that the probability of an argument is the sum of the probabilities of all sets of arguments that contain that argument.
The following definition is a generalization of the notion of epistemic extensions given in \cite{Hunter:2013}. 
For an argument $\cA$,  
it is labelled $\argin$ when it is believed to some degree (which we identify as $P(\cA) > 0.5$), 
it is labelled $\argout$ when it is disbelieved to some degree (which we identify as $P(\cA) < 0.5$), 
and it is labelled $\argundec$ when it is neither believed nor disbelieved (which we identify as $P(\cA) = 0.5$). More specifically, let $\AFcomplete$ be an abstract argumentation framework and $P:\powerset{\arguments}\rightarrow [0,1]$ a probability function on $\arguments$. The labelling $L_{P}:\arguments\rightarrow \{\argin,\argout,\argundec\}$ defined via the following constraints is called the \emph{epistemic labelling} of $P$:
\begin{itemize}
\item $L_{P}(\cA) = \argin \quad \text{iff}\quad P(\cA) > 0.5$
\item $L_{P}(\cA) = \argout \quad \text{iff}\quad P(\cA) < 0.5$
\item $L_{P}(\cA) = \argundec \quad \text{iff}\quad P(\cA) = 0.5$
\end{itemize}
The \emph{epistemic extension} of $P$ is the set of arguments that are labelled $\argin$ by the epistemic labelling, i.\,e. $X$ is an epistemic extension iff $X=\argin(L_{P})$.
Furthermore, we say that a labelling $L$ and a probability function $P$ are \emph{congruent}, denoted by $L\sim P$, if for all $\cA\in\arguments$ we have $L(\cA)=\argin\Leftrightarrow P(\cA)=1$, $L(\cA)=\argout\Leftrightarrow P(\cA)=0$, and $L(\cA)=\argundec\Leftrightarrow P(\cA)=0.5$. Note, that if $L\sim  P$ then $L=L_{P}$, i.\,e., if a labelling $L$ and a probability function $P$ are congruent then $L$ is also the epistemic labelling of $P$.

An epistemic labelling can be used to give either a standard semantics (as we will investigate in Section \ref{section:standard}) or a non-standard semantics (as we will investigate in Section \ref{section:nonstandard}). 

To further illustrate the epistemic extensions, consider the graph given in Figure \ref{fig:3cycle}.  Here, we may believe that, say, $\cA$ is valid and that $\cB$ and $\cC$ are not valid. In which case, with this extra epistemic information about the arguments, we can resolve the conflict and so take the set $\{ \cA \}$ as the ``epistemic" extension. In contrast, there is only one admissible set which is the empty set. So by having this extra epistemic information, we get a more informed extension (in the sense that it has harnessed the extra information in a sensible way).
\begin{figure}
\begin{center}
			\begin{tikzpicture}[->,>=latex]
			\tikzstyle{every node}=[font=\scriptsize]
%%%%,background rectangle/.style={fill=blue!10},show background rectangle]
%%%,fill=blue!20,rectangle
			\node[text centered,text width=3cm,draw] (a) at (6,3) {$\cA$ = Ann will go to the party and this means that Bob will not go to the party};
			\node[text centered,text width=3cm,draw] (b) at (4,0) {$\cB$ = Bob will go to the party and this means that Chris will not go to the party};
			\node[text centered,text width=3cm,draw] (c) at (8,0) {$\cC$ = Chris will go to the party and this means that Ann will not go to the party};
			\path (b) edge (c);
			\path (c) edge (a);
			\path (a) edge (b);
			\end{tikzpicture}
\end{center}
\caption{\label{fig:3cycle}Example of three arguments in a simple cycle.}
\end{figure}
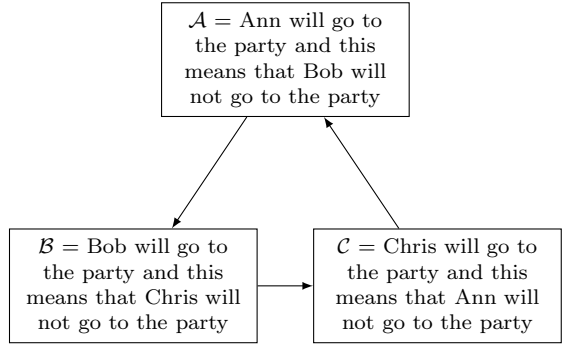
		
In general, we want epistemic extensions to allow us to better model the audience of argumentation. Consider, for example, when a member of the audience of a TV debate listens to the debate at home, she can produce the abstract argument graph based on the arguments and counterarguments exchanged. Then she can identify a probability function to represent the belief she has in each of the arguments. So she may disbelieve some of the arguments based on what she knows about the topic. Furthermore, she may disbelieve some of the arguments that are unattacked. 
%%For example, in the case of $\cA$ attacking $\cB$, she may have a low belief in $\cA$, and a high belief in $\cB$. Hence, her epistemic extension would contain $\cB$ instead of $\cA$. 
As an extreme, she is at liberty of completely disbelieving all of the arguments (so to assign probability 0 to all of them). If we want to model audiences, where the audience either does not want to or is unable to add counterarguments to an argument graph being constructed in some form of argumentation, we need to take the beliefs of the audience into account, and we need to consider which arguments they believe or disbelieve.

%%%%%%%%%%%%%%%%%%%%%%%%%%%%%%%%%%%%%%%%%%%%%%%%%
\section{Standard epistemic extensions}
%%%%%%%%%%%%%%%%%%%%
\label{section:standard}

We now consider some constraints on the probability function which may take different aspects of the structure of the argument graph into account. We will show how these constraints are consistent with Dung's notions of admissibility.

For the remainder of this paper let $\AFcomplete$ be an abstract argumentation framework and $P:\powerset{\arguments}\rightarrow [0,1]$. Consider the following properties (note that \textsf{COH} is from \cite{Hunter:2013} and \textsf{JUS} is from \cite{Thimm:2012}):
\begin{description}
	\item[COH] $P$ is \emph{coherent} wrt. $\AF$ if for every $\cA,\cB\in\arguments$, if $\cA \attacks \cB$ then $P(\cA)\leq 1-P(\cB)$.
	\item[SFOU] $P$ is \emph{semi-founded} wrt. $\AF$ if $P(\cA)\geq 0.5$ for every $\cA\in\arguments$ with $\attackers{\AF}{\cA}=\emptyset$.
	\item[FOU] $P$ is \emph{founded} wrt. $\AF$ if $P(\cA)=1$ for every $\cA\in\arguments$ with $\attackers{\AF}{\cA}=\emptyset$.
	\item[SOPT] $P$ is \emph{semi-optimistic} wrt. $\AF$ if $P(\cA)\geq 1- \sum_{\cB\in\attackers{\AF}{\cA}}P(\cB)$ for every $\cA\in\arguments$ with $\attackers{\AF}{\cA}\neq\emptyset$.
	\item[OPT] $P$ is \emph{optimistic} wrt. $\AF$ if $P(\cA)\geq 1- \sum_{\cB\in\attackers{\AF}{\cA}}P(\cB)$ for every $\cA\in\arguments$.
	\item[JUS] $P$ is \emph{justifiable} wrt. $\AF$ if $P$ is coherent and optimistic.
	\item[TER] $P$ is \emph{ternary} wrt. $\AF$ if $P(\cA) \in \{0,0.5,1\}$ for every $\cA\in\arguments$.
\end{description} 
We explain these constraints as follows:
\textsf{COH} ensures that if argument $\cA$ attacks argument $\cB$, then the degree to which $\cA$ is believed caps the degree to which $\cB$ can be believed;
\textsf{SFOU} ensures that if an argument is not attacked, then the argument is not disbelieved (i.\,e. $P(\cA) \geq 0.5$);
\textsf{FOU} ensures that if an argument is not attacked, then the argument is believed without doubt (i.\,e. $P(\cA) = 1$);
\textsf{SOPT} ensures that the belief in $\cA$ is bounded from below if the belief in its attackers is not high;
\textsf{OPT} ensures that if an argument is not attacked, then the argument is believed without doubt (i.\,e. $P(\cA) = 1$) and that the belief in $\cA$ is bounded from below if the belief in its attackers is not high;
\textsf{JUS} combines \textsf{COH} and \textsf{OPT} to provide bounds on the belief in an argument based on the belief in its attackers and attackees;
and \textsf{TER} ensures that the probability assignment is a three-valued assignment.
\begin{example}\label{Example:AF1}
	Consider the abstract argumentation framework $\AF=(\arguments,\attacks)$ depicted in Fig.~\ref{fig:AF1} and the probability functions depicted in Table~\ref{table:probfunctions1} (note that the probability functions there are only partially defined by giving the probabilities of arguments). The following observations can be made:
%	\begin{itemize}
1.) $P_{1}$ is semi-founded, founded, but neither coherent, optimistic, semi-optimistic, ternary, nor justifiable,
2.) $P_{2}$ is coherent and semi-optimistic, but neither semi-founded, founded, optimistic, ternary, nor justifiable,
3.) $P_{3}$ is coherent, semi-optimistic, semi-founded, founded, optimistic, and justifiable, but not ternary,
4.) $P_{4}$ is semi-founded, founded, optimistic, and semi-optimistic, but neither coherent, justifiable, nor ternary, and
5.) $P_{5}$ is coherent, semi-founded, semi-optimistic, and ternary but neither optimistic, justifiable, nor founded.
%	\end{itemize}	
	\begin{table}
		\begin{center}
		\begin{tabular}{|c|c|c|c|c|c|c|}
		\hline
			&	$\cA_{1}$&	$\cA_{2}$&	$\cA_{3}$&	$\cA_{4}$&	$\cA_{5}$ & $\cA_{6}$\\
			\hline\hline
			$P_{1}$ & 0.2	&	0.7	&	0.6	& 0.3 & 0.6 & 1\\	\hline
			$P_{2}$ & 0.7	& 	0.3	&	0.5	& 0.5 & 0.2 & 0.4\\\hline
			$P_{3}$ & 0.7	&	0.3	&	0.7  & 0.3  & 0 & 1\\\hline		
			$P_{4}$ & 0.7	&	0.8	&	0.9  & 0.8  & 0.7 & 1\\\hline		
			$P_{5}$ & 0.5	&	0.5	&	0.5  & 0.5 & 0.5 & 0.5\\\hline
		\end{tabular}
		\end{center}		
		
		\caption{Some probability functions for Example~\ref{Example:AF1}}
		\label{table:probfunctions1}
	\end{table}	
\end{example}
	\begin{figure}
		\begin{center}
			\begin{tikzpicture}[node distance=0.7cm]
			
				\node[args](args1){$\cA_{1}$};
				\node[args, right=of args1](args2){$\cA_{2}$};
				\node[args, right=of args2](args3){$\cA_{3}$};
				\node[args, right=of args3, yshift=0.7cm](args4){$\cA_{4}$};
				\node[args, right=of args3, yshift=-0.7cm](args5){$\cA_{5}$};
				\node[args, right=of args5](args6){$\cA_{6}$};
			
				\path(args1) edge [->,bend left] (args2);
				\path(args2) edge [->,bend left] (args1);
				\path(args2) edge [->] (args3);
				\path(args3) edge [->] (args4);
				\path(args4) edge [->, bend left] (args5);
				\path(args5) edge [->, bend left] (args4);
				\path(args5) edge [->] (args3);
				\path(args6) edge [->] (args5);

			\end{tikzpicture}
		\end{center}		
		\caption{A simple argumentation framework}
		\label{fig:AF1}
	\end{figure}
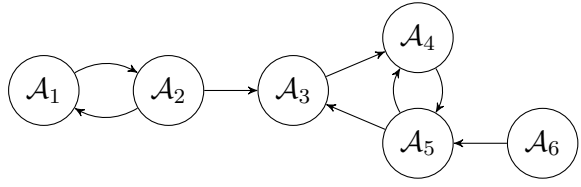
Let $\reallyAllProb$ be the set of all probability functions, $\allProb{\AF}$ be the set of all probability functions on $\arguments$, and $\allProbT{t}{\AF}$ be the set of all $t$-probability functions with $t\in\{$\textsf{COH},\textsf{SFOU},\textsf{FOU}, \textsf{OPT},\textsf{SOPT},\textsf{JUS},\textsf{TER}$\}$. We obtain the following relationships between the different classes of probability functions.
\begin{proposition}\label{prop:hasse1}
	Let $\AFcomplete$ be an abstract argumentation framework.
	\begin{enumerate}
		\item $\emptyset \subsetneq\allProbT{\mathsf{JUS}}{\AF}\subsetneq \allProbT{\mathsf{COH}}{\AF}\subsetneq\allProb{\AF}$
		\item $\allProbT{\mathsf{OPT}}{\AF} = \allProbT{\mathsf{SOPT}}{\AF}\cap \allProbT{\mathsf{FOU}}{\AF}$.
		\item $\allProbT{\mathsf{FOU}}{\AF} \subsetneq \allProbT{\mathsf{SFOU}}{\AF}$.
		\item $\emptyset \subsetneq\allProbT{\mathsf{TER}}{\AF} \subsetneq \allProb{\AF}$.
\end{enumerate}
\end{proposition}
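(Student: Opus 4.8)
The plan is to reduce everything to statements about the marginal values $P(\cA)$, using the elementary fact that these marginals can be prescribed freely. First I would record a \emph{realizability} observation: for any target values $(p_{\cA})_{\cA\in\arguments}\in[0,1]^{\arguments}$ there is a probability function $P$ on $\arguments$ with $P(\cA)=p_{\cA}$ for every $\cA$; concretely the product distribution $P(E)=\prod_{\cA\in E}p_{\cA}\prod_{\cA\notin E}(1-p_{\cA})$ has exactly these single-argument marginals. This lets me build witnesses by specifying marginals only, so the four claims become assertions about which marginal profiles satisfy \textsf{COH}, \textsf{SFOU}, \textsf{FOU}, \textsf{OPT}, \textsf{SOPT}, \textsf{JUS}, \textsf{TER}. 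All non-strict inclusions are then immediate from the definitions: \textsf{JUS} is defined as \textsf{COH} together with \textsf{OPT}, so $\allProbT{\mathsf{JUS}}{\AF}\subseteq\allProbT{\mathsf{COH}}{\AF}\subseteq\allProb{\AF}$, and $P(\cA)=1\Rightarrow P(\cA)\geq 0.5$ gives $\allProbT{\mathsf{FOU}}{\AF}\subseteq\allProbT{\mathsf{SFOU}}{\AF}$.

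For statement~2 I would split the universally quantified \textsf{OPT} inequality $P(\cA)\geq 1-\sum_{\cB\in\attackers{\AF}{\cA}}P(\cB)$ according to whether $\attackers{\AF}{\cA}$ is empty. When $\attackers{\AF}{\cA}\neq\emptyset$ the inequality is exactly the \textsf{SOPT} condition; when $\attackers{\AF}{\cA}=\emptyset$ the empty sum is $0$ and the inequality reads $P(\cA)\geq 1$, i.e.\ $P(\cA)=1$, which is exactly \textsf{FOU}. Hence $P$ is optimistic iff it is both semi-optimistic and founded. Statements~3 and~4 then only need strictness witnesses: for~3, an unattacked argument assigned marginal $0.5$ (all other unattacked arguments also $\geq 0.5$) yields a semi-founded but not founded $P$; for~4, the uniform $0.5$ profile lies in $\allProbT{\mathsf{TER}}{\AF}$ (so that class is nonempty) while any profile assigning some argument the value $0.3$ is a probability function that is not ternary.

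The substantive part is statement~1, specifically $\emptyset\subsetneq\allProbT{\mathsf{JUS}}{\AF}$, and I expect its verification to be the main obstacle. Here I would exhibit a justifiable $P$ built from the unique, always existing grounded labelling $L$: set $P(\cA)=1,0.5,0$ according as $L(\cA)=\argin,\argundec,\argout$, realized as above. Checking \textsf{COH} proceeds by case analysis on $L(\cA)$ for an attack $\cA\attacks\cB$: if $L(\cA)=\argin$ then $\cB$ is attacked by an $\argin$-argument, forcing $L(\cB)=\argout$ and $1\le 1-0$; if $L(\cA)=\argundec$ then $L(\cB)\in\{\argout,\argundec\}$, so $0.5\le 1-P(\cB)$; if $L(\cA)=\argout$ the bound is trivial. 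For \textsf{OPT} I would invoke statement~2 and verify \textsf{FOU} (unattacked arguments are $\argin$ in a complete labelling, hence get $1$) together with \textsf{SOPT} via the completeness conditions: an $\argout$ argument has an $\argin$ attacker contributing $1$ to the attacker sum, and an $\argundec$ argument has a non-$\argout$, hence $\argundec$, attacker contributing $0.5$, so the required lower bounds $0$ and $0.5$ are met.

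Finally the two strict inclusions of statement~1 are settled by single profiles: the all-zeros profile is coherent (since $0\le 1-P(\cB)$ always) but violates \textsf{OPT} at every argument (as $1-0>0$), so it lies in $\allProbT{\mathsf{COH}}{\AF}\setminus\allProbT{\mathsf{JUS}}{\AF}$; and, given any attack $\cA\attacks\cB$, the all-ones profile has $P(\cA)=1>0=1-P(\cB)$, so it is a non-coherent probability function. I would flag that these strictness witnesses presuppose the relevant structure in $\AF$ (at least one argument for statements~1 and~4, at least one attack for $\allProbT{\mathsf{COH}}{\AF}\subsetneq\allProb{\AF}$, and at least one unattacked argument for statement~3); on degenerate frameworks lacking these features the corresponding classes coincide and the inclusions fail to be strict, so the proposition should be read under the mild non-degeneracy assumption that $\AF$ contains them.
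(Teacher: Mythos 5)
Your proof is correct, and it takes a genuinely more self-contained route than the paper, which handles this proposition almost entirely by deferral: item 1 is delegated to \cite{Thimm:2012} and \cite{Hunter:2013}, and items 2--4 are asserted to ``follow directly from these definitions.'' Three things distinguish your argument. First, you make the realizability step explicit: all the classes are defined by constraints on the marginals $P(\cA)$, but their elements are probability functions on $\powerset{\arguments}$, so one must check that a prescribed marginal profile is attainable at all; your product-distribution construction settles this cleanly, whereas the paper leaves it implicit (its witnesses in Table~\ref{table:probfunctions1} are, as it admits, ``only partially defined''). Second, for the only non-trivial claim, $\emptyset \subsetneq \allProbT{\mathsf{JUS}}{\AF}$, you reconstruct from scratch what the paper imports from the literature: take the grounded labelling $L$ (which always exists), pass to the congruent ternary assignment, and verify \textsf{COH} and---via your item-2 decomposition---\textsf{FOU} and \textsf{SOPT} from the admissibility and completeness conditions on $L$; this is in substance the argument behind item 1 of Proposition~\ref{prop:standard1}, specialized to the grounded labelling. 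What the paper's route buys is brevity; what yours buys is a proof that actually compiles without external dependencies, plus the explicit marginal-realization lemma that the paper never states but tacitly uses throughout.

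Two further remarks. A small loose end: in your \textsf{SOPT} check for the grounded witness you treat only $\argout$ and $\argundec$ arguments, but an $\argin$ argument with attackers also needs the bound; it holds trivially, since all its attackers are $\argout$ with probability $0$, so the required lower bound is $1-0=1=P(\cA)$. More importantly, your degeneracy caveat is a genuine observation that the paper silently skips: the strict inclusions really do fail on degenerate frameworks (no arguments for items 1 and 4, no attacks for $\allProbT{\mathsf{COH}}{\AF}\subsetneq\allProb{\AF}$, no unattacked arguments for item 3---e.g.\ a single self-attacking argument makes both \textsf{FOU} and \textsf{SFOU} vacuous, collapsing item 3 to an equality). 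So reading the proposition under your mild non-degeneracy assumption is not a weakness of your proof but a correction to the statement's scope.
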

For the proof of item 1.) of the above proposition see \cite{Thimm:2012} and \cite{Hunter:2013}. The remaining relationships follow directly from these definitions.

For all probability functions $P$ such that $L_P$ is admissible in the classical sense, we have that $P$ assigns some degree of belief to each argument that is unattacked, thereby $P$ satisfies the \textsf{SFOU} constraint.
\begin{proposition}
	For all probability functions $P$, if $L_P$ is admissible then $P \in \allProbT{\mathsf{SFOU}}{\AF}$. 
	\begin{proof}
	Assume $L_P$ is admissible.
	Therefore, if $L_P(\cA) = \argout$, then there is an argument $\cB$ such that $\cB \rightarrow \cA$ and $L_P(\cA) = \argin$.
	Therefore, if $\attackers{\AF}{\cA}=\emptyset$, then  $L_P(\cA) \neq \argout$.
	Therefore, if $\attackers{\AF}{\cA}=\emptyset$, then $P(\cA) \geq 0.5$.
	Therefore, $P \in \allProbT{\mathsf{SFOU}}{\AF}$.
	\end{proof}
\end{proposition}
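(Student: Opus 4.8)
The plan is to argue by the contrapositive of the first admissibility condition, applied to unattacked arguments. The goal \textsf{SFOU} asks us to show $P(\cA) \geq 0.5$ for every $\cA$ with $\attackers{\AF}{\cA} = \emptyset$, and by the definition of the epistemic labelling the inequality $P(\cA) \geq 0.5$ is precisely the statement that $L_P(\cA) \neq \argout$. So the entire task reduces to showing that an admissible epistemic labelling never labels an unattacked argument as $\argout$.

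First I would assume $L_P$ is admissible and fix an arbitrary argument $\cA$ with $\attackers{\AF}{\cA} = \emptyset$. Then I would invoke admissibility condition~1, which requires that whenever $L_P(\cA) = \argout$ there exists some $\cB$ with $\cB \attacks \cA$ and $L_P(\cB) = \argin$. Since $\cA$ has no attackers at all, no such witness $\cB$ can exist, so the antecedent must fail: we conclude $L_P(\cA) \neq \argout$. Translating back through the epistemic labelling definition, $L_P(\cA) \neq \argout$ rules out the case $P(\cA) < 0.5$, hence $P(\cA) \geq 0.5$.

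Finally, because $\cA$ was an arbitrary unattacked argument, the inequality $P(\cA) \geq 0.5$ holds for all $\cA$ with $\attackers{\AF}{\cA} = \emptyset$, which is exactly the \textsf{SFOU} property, giving $P \in \allProbT{\mathsf{SFOU}}{\AF}$.

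There is no real obstacle here, as the argument is a one-line contrapositive once the labelling dictionary between $L_P(\cA) \neq \argout$ and $P(\cA) \geq 0.5$ is made explicit. The only point worth stating carefully is that admissibility constrains $\argout$-labelled arguments to have an $\argin$-attacker, so the absence of any attacker forces the argument off the $\argout$ value; everything else is unpacking the definition of the epistemic labelling.
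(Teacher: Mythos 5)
Your proof is correct and follows exactly the same route as the paper's: the contrapositive of admissibility condition~1 shows that an unattacked argument cannot be labelled $\argout$, and the epistemic labelling definition translates $L_P(\cA) \neq \argout$ into $P(\cA) \geq 0.5$, which is \textsf{SFOU}. The only difference is presentational --- you fix an arbitrary unattacked $\cA$ and make the quantifier handling explicit, whereas the paper writes the same chain of implications schematically (and contains a small typo, writing $L_P(\cA) = \argin$ where it means $L_P(\cB) = \argin$).
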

We can further constrain a probability assignment, so that the epistemic labelling straightforwardly captures the standard semantics (i.\,e. Dung's semantics). By setting the probability function appropriately, its epistemic labelling corresponds to grounded, complete, stable, preferred, or semi-stable labellings. All we require is a three-valued probability function that simulates each complete labelling function. For this, we provide the following definition that provides the counter-part in our framework for a complete labelling.
\begin{definition}\label{def:completeprobfunc}
	Let $\AFcomplete$ be an argumentation framework. Then a \emph{complete probability function} $P\in\allProb{\AF}$ for $\AF$ is a probability function $P$
such that for every $\cA \in \arguments$ the following conditions hold:
	\begin{enumerate}
		\item $P \in \allProbT{\mathsf{TER}}{\AF}$; 
		\item if $P(\cA)=1$ then $P(\cB)=0$ for all $\cB\in\arguments$ with $\cB\attacks \cA$;
		\item if $P(\cB)=0$ for all $\cB$ with $\cB\attacks\cA$ then $P(\cA) = 1$; 		
		\item if $P(\cA)=0$ then there is $\cB\in\arguments$ with $P(\cB)=1$ and $\cB\attacks\cA$;		
		\item if $P(\cB)=1$ for some $\cB$ with $\cB\attacks \cA$ then $P(\cA)=0$.
	\end{enumerate}
\end{definition}
Note that the above definition straightforwardly follows the definition of completeness for classical semantics. Therefore, we have that $P$ is a complete probability function if and only if there is a complete labeling $L$ and $P\sim L$.

Completeness of probability functions can be characterized by the aforementioned properties as follows.
\begin{proposition}
	For an argument graph $\AF$, and for $P \in \allProbT{}{\AF}$, $P$ is a complete probability function iff $P \in \allProbT{\mathsf{COH}}{\AF} \cap \allProbT{\mathsf{FOU}}{\AF} \cap \allProbT{\mathsf{TER}}{\AF}$.
\begin{proof}~
%\begin{description}
%	\item[
($\Rightarrow$)
Assume $P$ is a complete probability function.
By Definition 1 Part 1, 
$P \in \allProbT{\mathsf{TER}}{\AF}$.
By Definition 1 Part 3, 
if $\attackers{\AF}{\cA}=\emptyset$, 
then $P(\cA) = 1$,
and hence $P \in \allProbT{\mathsf{FOU}}{\AF}$.
By Definition 1 Parts 2 to 5,
\begin{itemize}
\item $P(\cA) = 1$ iff for all $\cB$ s.t. $\cB \rightarrow \cA$, $P(\cB) = 0$
\item $P(\cA) = 0.5$ iff there is not an $\cB$ s.t. $\cB \rightarrow \cA$, $P(\cB) = 1$ and there is a $\cB'$ s.t. $\cB' \rightarrow \cA$, $P(\cB') = 0.5$
\item $P(\cA) = 0$ iff there is a $\cB$ s.t. $\cB \rightarrow \cA$, $P(\cB) = 1$
\end{itemize}
So for all attacks $\cB \rightarrow \cA$, $P(\cA) \leq 1 -P(\cB)$. 
Hence, $P \in \allProbT{\mathsf{COH}}{\AF}$. 
%\item[
($\Leftarrow$)
Assume $P \in \allProbT{\mathsf{COH}}{\AF} \cap \allProbT{\mathsf{FOU}}{\AF} \cap \allProbT{\mathsf{TER}}{\AF}$.
We now show that Parts 1 to 5 of Definition 1 are satisfied for $P$.
From $P \in \allProbT{\mathsf{TER}}{\AF}$, $P$ satisfies Part 1.
From $P \in \allProbT{\mathsf{COH}}{\AF} \cap \allProbT{\mathsf{FOU}}{\AF}$, $P$ satisfies Parts 2 and 3.
From $P \in \allProbT{\mathsf{COH}}{\AF}$, $P$ satisfies Parts 4 and 5.\qedhere
%\end{description}
	\end{proof}
\end{proposition}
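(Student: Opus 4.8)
The plan is to prove the biconditional by translating between the inequality constraints \textsf{COH}/\textsf{FOU} and the three-valued conditions Parts~1--5 of Definition~\ref{def:completeprobfunc}. Since \textsf{TER} pins every value $P(\cA)$ to $\{0,0.5,1\}$, all of the reasoning reduces to a finite case analysis on the values taken by an argument and its attackers, and I would organise it so that \textsf{COH} supplies the ``upper-bound'' half of completeness while a second property supplies the ``lower-bound'' half.

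For ($\Rightarrow$), assume $P$ is a complete probability function. Part~1 is literally \textsf{TER}. For \textsf{FOU} I would read Part~3 in the vacuous case $\attackers{\AF}{\cA}=\emptyset$: the premise ``$P(\cB)=0$ for all attackers'' holds trivially, forcing $P(\cA)=1$. For \textsf{COH} I would check each attack $\cB\attacks\cA$ by the value of $P(\cB)$: if $P(\cB)=1$ then Part~5 gives $P(\cA)=0$; if $P(\cB)=0.5$ then Part~2 forbids $P(\cA)=1$, leaving $P(\cA)\le 0.5$; and $P(\cB)=0$ is immediate. In each case $P(\cA)\le 1-P(\cB)$, so this direction is routine.

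For ($\Leftarrow$), assume \textsf{COH}, \textsf{FOU}, \textsf{TER}. Part~1 is \textsf{TER}. Parts~2 and~5 are the easy, upper-bound half: for any attack $\cB\attacks\cA$, \textsf{COH} gives $P(\cB)\le 1-P(\cA)$, so $P(\cA)=1$ forces $P(\cB)=0$ (Part~2) and $P(\cB)=1$ forces $P(\cA)=0$ (Part~5).

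The hard part will be Parts~3 and~4, and this is where I expect the main obstacle. Both are lower-bound/existential claims: Part~3 must push $P(\cA)$ \emph{up} to $1$ once all of its attackers vanish, and Part~4 must exhibit an attacker with $P(\cB)=1$ whenever $P(\cA)=0$. However, \textsf{COH} only ever bounds values from \emph{above}, and \textsf{FOU} constrains only unattacked arguments, so neither seems able to deliver these conclusions. I would stress-test this on configurations such as a chain $\cC\attacks\cB\attacks\cA$ with $P(\cC)=1$, $P(\cB)=0$, $P(\cA)=0.5$ (which meets \textsf{COH}, \textsf{FOU}, and \textsf{TER}, yet has $P(\cA)=0.5$ instead of the $1$ demanded by Part~3), and on an argument $\cA$ whose two attackers mutually attack each other and are each valued $0.5$ while $P(\cA)=0$ (which threatens Part~4). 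These point to needing a genuine lower bound: the natural candidate is \textsf{OPT}, since $P(\cA)\ge 1-\sum_{\cB\in\attackers{\AF}{\cA}}P(\cB)$ instantly yields $P(\cA)=1$ when all attackers are $0$ and forces enough attacker-mass when $P(\cA)=0$ (recall $\textsf{JUS}=\textsf{COH}\cap\textsf{OPT}$). Settling whether \textsf{FOU} alone suffices for Parts~3 and~4, or whether the hypotheses must be strengthened by an \textsf{OPT}-type condition, is, I expect, the crux of the argument.
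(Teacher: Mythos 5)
Your ($\Rightarrow$) direction is correct and is essentially the paper's own argument: Part~1 gives \textsf{TER}, Part~3 read with a vacuous antecedent gives \textsf{FOU}, and a case analysis on the attacker's value (via Parts~5 and~2) gives \textsf{COH}. The substance of your proposal lies in the ($\Leftarrow$) direction, and there your diagnosis is right --- but the defect you found is not a gap in your attempt, it is a gap in the proposition and in the paper's own proof. The paper's ($\Leftarrow$) argument consists of the bare assertions that Parts~2 and~3 follow from $\allProbT{\mathsf{COH}}{\AF}\cap\allProbT{\mathsf{FOU}}{\AF}$ and that Parts~4 and~5 follow from $\allProbT{\mathsf{COH}}{\AF}$; exactly the two halves you isolate as lower-bound/existential (Parts~3 and~4) do not follow, since \textsf{COH} only caps probabilities from above and \textsf{FOU} says nothing about attacked arguments. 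Your chain counterexample is valid: with $\cC\attacks\cB\attacks\cA$ and $P(\cC)=1$, $P(\cB)=0$, $P(\cA)=0.5$ (realizable, e.g., by $P(\{\cA,\cC\})=P(\{\cC\})=0.5$), all three properties hold, yet Part~3 fails at $\cA$; equivalently, the epistemic labelling has $\cA$'s only attacker $\argout$ while $\cA$ is $\argundec$, so it is not complete. Your second example refutes Part~4 in the same way. Hence the biconditional as printed is false, and only the ($\Rightarrow$) half survives.

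One point you left open should be settled: strengthening the hypothesis by \textsf{OPT} repairs Part~3 but still does not repair Part~4. If all attackers of $\cA$ have probability $0$, then \textsf{OPT} forces $P(\cA)\geq 1$, which is what Part~3 needs (and indeed your chain example violates \textsf{OPT} at $\cA$, since $0.5 < 1 - P(\cB) = 1$). But your own second counterexample --- $\cB_1$ and $\cB_2$ attacking each other and both attacking $\cA$, with $P(\cB_1)=P(\cB_2)=0.5$ and $P(\cA)=0$ --- already satisfies \textsf{OPT} (for $\cA$: $0\geq 1-(0.5+0.5)$; for each $\cB_i$: $0.5\geq 1-0.5$), yet no attacker of $\cA$ has probability $1$. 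Summed attacker mass at least $1$ is strictly weaker than the existence of a single attacker with probability $1$, which is what Part~4 demands. So even $\allProbT{\mathsf{JUS}}{\AF}\cap\allProbT{\mathsf{TER}}{\AF}$ is too weak; a correct characterization needs a genuinely ``max''-type condition (if $P(\cA)=0$ then $P(\cB)=1$ for some $\cB\in\attackers{\AF}{\cA}$), i.e., essentially Part~4 itself, or one should fall back on the remark preceding the proposition and characterize complete probability functions as exactly those congruent to complete labellings.
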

In the same way that Caminada and Gabbay \cite{Caminada:2009} showed that different semantics can be obtained by imposing further restrictions on the choice of labelling, we can obtain the different semantics by imposing further restrictions on the choice of complete probability function. These constraints, as shown in the following result, involve minimizing or maximizing particular assignments. So for instance, if the assignment of $1$ to arguments is maximized, then a preferred labelling is obtained.

\begin{proposition}\label{prop:standard0}
Let $\AFcomplete$ be an abstract argumentation framework and $P \in \allProb{\AF}$.
If $P$ is a complete probability function for $\AF$
and the restriction specified in Table~\ref{tbl:correspondences} holds for $P$,
then the corresponding type of epistemic labelling is obtained.
\begin{proof}
Let $L$ and $P$ be congruent.
So $L$ is a complete labelling iff $P$ is a complete probability assignment.
Therefore, each restriction in Section \ref{section:preliminaries} holds for $L$ 
iff the corresponding restriction in Table~\ref{tbl:correspondences} holds for $P$.
For instance, ``Maximal number of arguments $\cA$ such that $L(\cA) = \argundec$" holds 
iff ``Maximal number of arguments $\cA$ such that $P(\cA) = 0.5$" holds.
Therefore, the corresponding type of extension for the restriction on $L$ (as listed in Section \ref{section:preliminaries}
and proven to hold by Caminada and Gabbay in \cite{Caminada:2009}),  
also hold for the equivalent restriction on $P$ in the Table~\ref{tbl:correspondences}.
\end{proof}
\end{proposition}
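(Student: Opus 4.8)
The plan is to reduce the statement to Caminada and Gabbay's classification of complete labellings by exploiting the congruence relation $\sim$. Everything hinges on the remark following Definition~\ref{def:completeprobfunc}: the map sending a complete labelling $L$ to the unique probability function $P$ with $L \sim P$ is a bijection between the complete labellings and the complete probability functions of $\AF$. I would begin by fixing this bijection and working with a congruent pair $L \sim P$ throughout.

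The first real step is to record the set identities that $L \sim P$ supplies for free. Since $L(\cA) = \argin \Leftrightarrow P(\cA) = 1$, $L(\cA) = \argout \Leftrightarrow P(\cA) = 0$, and $L(\cA) = \argundec \Leftrightarrow P(\cA) = 0.5$, we get $\argin(L) = \{\cA \mid P(\cA) = 1\}$, $\argout(L) = \{\cA \mid P(\cA) = 0\}$, and $\argundec(L) = \{\cA \mid P(\cA) = 0.5\}$ as literal equalities of subsets of $\arguments$. Consequently, if the complete labellings are ordered by inclusion of their $\argin$-sets (or their $\argundec$-sets) and the complete probability functions by inclusion of $\{\cA \mid P(\cA) = 1\}$ (or $\{\cA \mid P(\cA) = 0.5\}$), then the bijection above becomes an isomorphism of posets. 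Hence $\argin(L)$ is minimal (maximal) exactly when $\{\cA \mid P(\cA) = 1\}$ is, $\argundec(L)$ is minimal exactly when $\{\cA \mid P(\cA) = 0.5\}$ is, and $\argundec(L) = \emptyset$ exactly when $\{\cA \mid P(\cA) = 0.5\} = \emptyset$. Reading Table~\ref{tbl:correspondences} row by row, each restriction on $P$ is thus equivalent to the corresponding restriction on $L$ from Section~\ref{section:preliminaries}.

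It then remains to invoke Caminada and Gabbay~\cite{Caminada:2009}, who show that among complete labellings the minimality of $\argin$, the maximality of $\argin$, the emptiness of $\argundec$, and the minimality of $\argundec$ characterise the grounded, preferred, stable, and semi-stable labellings respectively. Combining this with the equivalence of restrictions from the previous step, and with the fact that $L \sim P$ forces $L = L_P$, I would conclude that whenever $P$ is a complete probability function satisfying the table entry for a given semantics, its epistemic labelling $L_P$ is exactly a labelling of that semantics.

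The step I expect to require the most care is the poset-isomorphism step, specifically the reading of ``minimal''/``maximal''. An informal ``number of arguments'' phrasing would suggest cardinality, yet the classical semantics in Section~\ref{section:preliminaries} are defined via set inclusion, and for finite frameworks minimality by inclusion does not coincide with minimality by cardinality. To keep the argument sound I would insist that every minimality and maximality condition in Table~\ref{tbl:correspondences} be read with respect to set inclusion, matching the preliminaries; once that convention is fixed, the set identities from the second step make the transfer automatic and no genuine computation is left to do.
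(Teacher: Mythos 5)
Your proposal is correct and follows essentially the same route as the paper's own proof: both use the congruence $L \sim P$ to transfer each restriction in Table~\ref{tbl:correspondences} to the corresponding restriction on complete labellings, and then invoke Caminada and Gabbay's characterisations of grounded, preferred, stable, and semi-stable labellings. Your version is merely more explicit — spelling out the set identities $\argin(L)=\{\cA \mid P(\cA)=1\}$ etc., the fact that $L\sim P$ forces $L=L_P$, and the convention that the table's ``maximal/minimal no.\ of'' conditions must be read as set inclusion to match Section~\ref{section:preliminaries} — details the paper's proof leaves implicit.
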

\begin{table}
\begin{center}
\begin{tabular}{|c|c|}
\hline
Restriction on probability function $P$ & Classical semantics\\
\hline
\hline
No restriction & complete extensions\\
No arguments $\cA$ such that $P(\cA) = 0.5$ & stable\\
Maximal no. of $\cA$ such that $P(\cA) = 1$ & preferred \\
Maximal no. of $\cA$ such that $P(\cA) = 0$ & preferred\\
Maximal no. of $\cA$ such that $P(\cA) = 0.5$ & grounded\\
Minimal no. of $\cA$ such that $P(\cA) = 1$ & grounded\\
Minimal no. of $\cA$ such that $P(\cA) = 0$ & grounded\\
Minimal no. of $\cA$ such that $P(\cA) = 0.5$ & semi-stable\\
\hline
\end{tabular}
\caption{Correspondences between probabilistic and classical semantics}
\label{tbl:correspondences}
\end{center}
\end{table}
For an argumentation framework $\AF$ we can identify specific probability functions in $P \in \allProbT{\mathsf{JUS}}{\AF}$ that are congruent with admissible labellings, grounded labellings, or stable labellings, for $\AF$ as follows.
\begin{proposition}\label{prop:standard1}(From \cite{Thimm:2012})
	Let $\AFcomplete$ be an abstract argumentation framework.
\begin{enumerate}
		\item For every admissible $L$ there is $P\in\allProbT{\mathsf{JUS}}{\AF}$ with $L\sim P$.
		\item Let $L$ be the grounded labelling and let\footnote{Define the \emph{entropy} $H(P)$ of $P$ as $H(P)=-\sum_{E\subseteq \arguments}P(E)\log P(E)$} $P=\arg\max_{Q\in \allProbT{\mathsf{JUS}}{\arguments}}H(Q)$. Then $L\sim P$.
		\item Let stable labellings exist for $\AF$ and let $L$ be a stable labelling. Then there is $P\in \arg\min_{Q\in \allProbT{\mathsf{JUS}}{\arguments}}H(Q)$ with $L\sim P$.
	\end{enumerate}
\end{proposition}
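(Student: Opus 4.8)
The three parts share a common reduction. Since $L\sim P$ pins down the marginals $P(\cA)$ to be $1$, $0$, or $0.5$ according to whether $L(\cA)$ is $\argin$, $\argout$, or $\argundec$, and since the two defining conditions of $\allProbT{\mathsf{JUS}}{\AF}$ — coherence $P(\cA)\le 1-P(\cB)$ and optimism $P(\cA)\ge 1-\sum_{\cB\in\attackers{\AF}{\cA}}P(\cB)$ — only ever refer to the marginals $P(\cA)$ and never to the full distribution on $\powerset{\arguments}$, membership in $\allProbT{\mathsf{JUS}}{\AF}$ is entirely a property of the marginal vector. So for each part I would (a) exhibit a distribution on $\powerset{\arguments}$ whose marginals are the values forced by $L$ — the \emph{product} (independent-inclusion) distribution, which puts mass $(1/2)^{|\argundec(L)|}$ on every set $E$ with $\argin(L)\subseteq E\subseteq \argin(L)\cup\argundec(L)$ and realises any marginals in $\{0,0.5,1\}$ — and (b) verify the two linear marginal inequalities from the structure of $L$.

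For part 1, take the product distribution $P$ above. Verification of coherence: for an attack $\cA\attacks\cB$ with $L(\cB)=\argin$, admissibility clause 2 forces $L(\cA)=\argout$, so $P(\cA)=0\le 1-P(\cB)$; for $L(\cA)=\argout$, or for $L(\cA),L(\cB)$ both $\argundec$, the inequality $P(\cA)+P(\cB)\le 1$ is immediate from the values $0$ and $0.5$; conflict-freeness rules out two $\argin$-arguments attacking each other. Verification of optimism: for $L(\cA)=\argin$ it is trivial, and for $L(\cA)=\argout$ admissibility clause 1 supplies an $\argin$-attacker $\cB$ with $P(\cB)=1$, so $\sum_{\cB\in\attackers{\AF}{\cA}}P(\cB)\ge 1$ and the bound holds. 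The delicate cases are exactly the undecided and the unattacked arguments, where the right-hand side of optimism is largest; here one must use that every such argument retains a non-$\argout$ attacker (and that no $\argin$-argument attacks an $\argundec$-argument, which would break coherence). I expect this to be the point where the precise constraints of the admissible labellings under consideration are needed, and I would dispatch it by the case analysis just indicated.

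For part 2 I would first observe that $\allProbT{\mathsf{JUS}}{\AF}$ is a convex polytope: it is the probability simplex over $\powerset{\arguments}$ cut out by the finitely many linear inequalities that coherence and optimism impose on the $P(E)$. Since $H$ is strictly concave, the maximiser is unique. To identify it, I would use that among all joint distributions with a fixed marginal vector the product distribution uniquely maximises entropy, with $H=\sum_{\cA\in\arguments}h(P(\cA))$ where $h(x)=-x\log x-(1-x)\log(1-x)$; hence maximising $H$ over $\allProbT{\mathsf{JUS}}{\AF}$ reduces to maximising $\sum_\cA h(P(\cA))$ over marginal vectors satisfying coherence and optimism, with the optimum attained by the corresponding product distribution. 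Now optimism forces $P(\cA)=1$ on every unattacked $\cA$, coherence then forces $P(\cA)=0$ on every argument all of whose attackers are so forced, and iterating this is precisely the grounded propagation; the forced arguments are the grounded $\argin$- and $\argout$-arguments. On the remaining (grounded-undecided) arguments the value $0.5$ is feasible — each has an undecided attacker and no $\argin$-attacker, so both inequalities hold — and $0.5$ uniquely maximises $h$, so by strict concavity the unique optimum places $0.5$ there. This marginal vector is exactly the one congruent to the grounded labelling, giving $L\sim P$. I regard this identification of the maximiser as the main obstacle of the whole proposition.

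For part 3 I would use that $H\ge 0$ with equality iff $P$ is concentrated on a single set $E$, and then characterise which point masses lie in $\allProbT{\mathsf{JUS}}{\AF}$: with marginals in $\{0,1\}$, coherence forces $E$ to be conflict-free, while optimism forces every argument outside $E$ to be attacked by $E$ (an unattacked argument left outside $E$ would violate optimism) — that is, $E$ is stable. Hence the entropy-minimisers in $\allProbT{\mathsf{JUS}}{\AF}$ are precisely the point masses on stable extensions, a nonempty set by hypothesis; for the given stable $L$ the point mass on $\argin(L)$ is such a minimiser and is congruent to $L$. The substantive step here is the equivalence ``point mass justifiable $\Leftrightarrow$ supporting set stable'', which follows from the verification above.
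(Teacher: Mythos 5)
The paper itself contains no proof of this proposition---it is imported wholesale from \cite{Thimm:2012}---so your attempt can only be judged on its own merits rather than against an in-paper argument. Parts 2 and 3 of your proposal are sound, and the route is the natural one: the observation that justifiability depends only on the marginal vector, the bound $H(P)\leq\sum_{\cA\in\arguments}h(P(\cA))$ with equality exactly for the product distribution, the propagation argument showing that \textsf{OPT} and \textsf{COH} force the grounded $\argin$/$\argout$ values on \emph{every} justifiable function, and the characterisation of the entropy minimisers as the point masses on stable extensions, together give complete and correct proofs of the two entropy statements.

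Part 1, however, has a genuine gap, and it sits exactly at the point you flagged and then claimed to ``dispatch by the case analysis just indicated.'' The facts you need there---that no $\argin$-argument attacks an $\argundec$-argument, that every $\argundec$-argument has a non-$\argout$ attacker, and (implicitly, for optimism) that no unattacked argument is labelled $\argundec$---are consequences of clause 3 in the paper's definition of a \emph{complete} labelling; they do not hold for merely admissible labellings, so no case analysis can supply them. Concretely: take $\arguments=\{\cA\}$ with no attacks and $L(\cA)=\argundec$; this $L$ is admissible (both admissibility clauses are vacuous), congruence forces $P(\cA)=0.5$, yet \textsf{OPT} forces $P(\cA)\geq 1$, so no $P\in\allProbT{\mathsf{JUS}}{\AF}$ is congruent with $L$. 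Similarly, $\cA\attacks\cB$ with $L(\cA)=\argin$, $L(\cB)=\argundec$ is admissible, but $P(\cA)=1$, $P(\cB)=0.5$ violates \textsf{COH}. Since congruence pins down the marginals and justifiability depends only on them, these are outright counterexamples: Part 1 is false as literally stated for the paper's (Caminada-style) admissible labellings, not merely hard to prove. Your product-distribution argument does go through verbatim once ``admissible'' is strengthened to ``complete''---clause 3 supplies precisely the two missing facts---which is evidently the reading under which the result cited from \cite{Thimm:2012} should be understood.
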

So Proposition \ref{prop:standard0} and Proposition \ref{prop:standard1} provide two ways to identify probability functions that capture specific types of labellings. Each of these results show that standard notions of classical semantics (i.\,e.\ admissibility and the definitions for different kinds of labelling such as grounded labellings, stable labellings, etc.) can be captured using probability functions. 

The next result shows that using probability functions to capture labellings gives a finer-grained formalization of classical semantics.

\begin{proposition}
	For each complete labelling $L$, if there is an argument $\cA$ such that $L(\cA) \neq \argundec$, then there are infinitely many probability functions $P$ such that $L_P= L$. 
	\begin{proof}
		Let $L$ be a complete labeling such that there is an argument $\cA$ with $L(\cA) \neq \argundec$. Without loss of generality assume that $L(\cA)=\argin$. Then every probability function with $P(\cB)=0.5$ iff $L(\cB)=\argundec$, $P(\cB)=0$ iff $L(\cB)=\argout$, $P(\cB)=1$ iff $L(\cB)=\argin$ and $\cB\neq \cA$, and $P(\cA)\in(0.5,1]$ yields $L_{P}=L$.
	\end{proof}
\end{proposition}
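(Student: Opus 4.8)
The plan is to reduce the problem to a single free parameter and then to exhibit an explicit family of probability functions realizing the desired marginal values. First I would exploit the symmetry between $\argin$ and $\argout$ to assume without loss of generality that $L(\cA) = \argin$; the case $L(\cA) = \argout$ is handled in the same way by swapping the roles of $0$ and $1$. Writing $I = \argin(L)$, $O = \argout(L)$, and $U = \argundec(L)$, I would prescribe the following target values for the single-argument probabilities: $P(\cB) = 1$ for every $\cB \in I$ with $\cB \neq \cA$, $P(\cB) = 0$ for every $\cB \in O$, $P(\cB) = 0.5$ for every $\cB \in U$, and $P(\cA) = p$ for a parameter $p$ ranging over $(0.5, 1]$. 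By the definition of the epistemic labelling, any probability function attaining exactly these values satisfies $L_P = L$, since then $P(\cB) > 0.5$ holds precisely on $I$ (including $\cA$, as $p > 0.5$), $P(\cB) < 0.5$ holds precisely on $O$, and $P(\cB) = 0.5$ holds precisely on $U$.

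The essential step, and the only one requiring genuine care, is to confirm that these prescribed single-argument values actually arise from a bona fide probability function on $\powerset{\arguments}$, that is, from a distribution over \emph{subsets} of $\arguments$ summing to $1$, rather than from a mere assignment of numbers to individual arguments. I would settle this by taking the product (independent) distribution
\[
  P_p(E) = \prod_{\cC \in E} P(\cC)\,\prod_{\cC \notin E}\bigl(1 - P(\cC)\bigr),
\]
with the single-argument values $P(\cC)$ as prescribed above. This $P_p$ is manifestly non-negative and sums to $1$ over all $E \subseteq \arguments$, so it is a probability function; and a standard computation shows that, for each argument $\cB$, the induced marginal $\sum_{\cB \in E \subseteq \arguments} P_p(E)$ equals exactly the prescribed value $P(\cB)$. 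In particular, the factors $1 - P(\cB) = 0$ for $\cB \in I \setminus \{\cA\}$ force every positive-probability set to contain all such $\cB$, while $P(\cB) = 0$ for $\cB \in O$ forces every positive-probability set to avoid $O$, so the boundary values are realized consistently. Hence $L_{P_p} = L$ for every $p \in (0.5, 1]$.

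Finally, to obtain \emph{infinitely many} such functions rather than just one, I would observe that distinct values $p, p' \in (0.5, 1]$ yield distinct functions, because $P_p(\cA) = p \neq p' = P_{p'}(\cA)$. Since the interval $(0.5, 1]$ is infinite (indeed uncountable), the family $\{P_p \mid p \in (0.5, 1]\}$ supplies infinitely many probability functions, each with $L_{P_p} = L$, which is the claim. This is exactly the point at which the hypothesis that some $\cA$ has $L(\cA) \neq \argundec$ is used: an argument labelled $\argundec$ would be pinned to $P(\cA) = 0.5$ and could not serve as the free parameter. The main obstacle is therefore not the counting at the end but the realizability of the prescribed marginals as an honest distribution over subsets; once the product construction is in hand, the remainder is immediate.
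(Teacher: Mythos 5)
Your proof is correct and follows essentially the same approach as the paper's: fix the congruent values $1$, $0$, and $0.5$ on every argument other than $\cA$, and let $P(\cA)$ range freely over $(0.5,1]$, with distinct parameter values giving distinct functions. The only difference is that you explicitly verify, via the product distribution over subsets, that these prescribed marginals are realized by a genuine probability function on $\powerset{\arguments}$ --- a realizability point the paper's proof leaves implicit, so your write-up is in fact slightly more complete.
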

Obviously, for every probability function $P$, there is by definition exactly one epistemic labelling $L_P$. This means that using a probability function to identify which arguments are $\argin$, $\argundec$, or $\argout$, subsumes using labels. Furthermore, the probability function captures more information about the arguments. The granularity can differentiate between for example a situation where $\cA$ is believed (i.\,e. it is $\argin$) with certainty by $P(\cA) =1$ from a situation where $\cA$ is only just believed (i.\,e. it is only just $\argin$) for example by $P(\cA) =0.51$. Similarly, we can differentiate a situation where an attack by $\cB$ on $\cA$ is undoubted when $P(\cB) = 1$ and $P(\cA) = 0$ from a situation where an attack by $\cB$ on $\cA$ is very much doubted when for example $P(\cB) = 0.55$ and $P(\cA) = 0.45$.

In conclusion, we have shown how axioms can be used to constrain the probability function, and thereby constrain the epistemic labelings and the epistemic extensions. This allows us to subsume Dung's notions of extensions as epistemic extensions. Furthermore, we get a finer-grained representation of the labelling of arguments by representing the belief in each of the arguments.

%%%%%%%%%%%%%%%%%%%%%%%%%%%%%%%%%%%%%%%%%%%%%%%%%
\section{Non-standard epistemic extensions}\label{section:nonstandard}
%%%%%%%%%%%%%%%%%%%%%%%
Before exploring the notion of non-standard epistemic extensions, we will augment the set of properties we introduced in the previous section with the following properties. 
Let $\AFcomplete$ be an abstract argumentation framework and $P:\powerset{\arguments}\rightarrow [0,1]$. 
\begin{description}
	\item[RAT] $P$ is \emph{rational} wrt. $\AF$ if for every $\cA,\cB\in\arguments$, if $\cA \attacks \cB$ then $P(\cA)>0.5$ implies $P(\cB)\leq 0.5$.
	\item[NEU] $P$ is \emph{neutral} wrt. $\AF$ if $P(\cA) = 0.5$ for every $\cA\in \arguments$.
	\item[INV] $P$ is \emph{involutary} wrt. $\AF$ if for every $\cA,\cB \in \arguments$, if $\cA \attacks \cB$, then $P(\cA) = 1 - P(\cB)$.
	\item[MAX] $P$ is \emph{maximal} wrt. $\AF$ if $P(\cA) = 1$ for every $\cA\in \arguments$.
	\item[MIN] $P$ is \emph{minimal} wrt. $\AF$ if $P(\cA) = 0$ for every $\cA\in \arguments$.	
\end{description} 
We explain these constraints as follows:
\textsf{RAT} ensures that if argument $\cA$ attacks argument $\cB$, and $\cA$ is believed (i.\,e. $P(\cA) > 0.5$), then $\cB$ is not believed (i.\,e. $P(\cB) \leq 0.5$);
\textsf{NEU} ensures that all arguments are neither believed nor disbelieved (i.\,e. $P(\cA) = 0.5$ for all arguments);
\textsf{INV} ensures that if argument $\cA$ attacks argument $\cB$, then the belief in $\cA$ is the inverse of the belief in $\cB$; 
\textsf{MAX} ensures that all arguments are completely believed;
and
\textsf{MIN} ensures that all arguments are completely disbelieved.
\begin{example}\label{Example:AF2}
	We continue Example~\ref{Example:AF1}, the abstract argumentation framework $\AF=(\arguments,\attacks)$ depicted in Fig.~\ref{fig:AF1}, and the probability functions depicted in Table~\ref{table:probfunctions1}. The following observations can be made:
%	\begin{itemize}
1.) $P_{2}$ and $P_{3}$ are rational but neither neutral, involutary, maximal, nor minimal,	%$P_{1}$, $P_{2}$
2.) $P_{1}$ and $P_{4}$ are neither rational, neutral, involutary, maximal, nor minimal, and
3.) $P_{5}$ is rational, neutral, and involutary but neither maximal nor minimal.
%	\end{itemize}	
\end{example}	
As before let $\allProbT{t}{\AF}$ be the set of all $t$-probability functions with $t\in\{$\textsf{COH},\textsf{SFOU}, \textsf{FOU}, \textsf{SOPT}, \textsf{OPT}, \textsf{JUS}, \textsf{TER}, \textsf{RAT}, \textsf{NEU}, \textsf{INV}, \textsf{MAX}, \textsf{MIN}$\}$. We extend the classification from Proposition~\ref{prop:hasse1} as follows.
\begin{proposition}\label{prop:hasse2}
	Let $\AFcomplete$ be an abstract argumentation framework.
	\begin{enumerate}
		\item $\emptyset \subsetneq\allProbT{\mathsf{JUS}}{\AF}\subsetneq \allProbT{\mathsf{COH}}{\AF}\subsetneq\allProbT{\mathsf{RAT}}{\AF}\subsetneq\allProb{\AF}$
		\item $\emptyset \subsetneq \allProbT{\mathsf{NEU}}{\AF} \subseteq \allProbT{\mathsf{INV}}{\AF}  \subsetneq \allProbT{\mathsf{COH}}{\AF}$ 
		\item $\emptyset \subsetneq \allProbT{\mathsf{INV}}{\AF} \subsetneq \allProbT{\mathsf{SOPT}}{\AF}$ 
		\item $\emptyset \subsetneq \allProbT{\mathsf{MIN}}{\AF} \subsetneq \allProbT{\mathsf{COH}}{\AF}$
		\item $\emptyset \subsetneq \allProbT{\mathsf{MAX}}{\AF} \subsetneq \allProbT{\mathsf{OPT}}{\AF}$
	\end{enumerate}
	\begin{proof}
		We only give the proof for 2.). The proofs for 1.) can be found in \cite{Thimm:2012} and \cite{Hunter:2013}, the remaining proofs are straightforward.
		
		The probability function $P$ with $P(E)=1/|2^{\arguments}|$ for all $E\subseteq \arguments$ has $P(\cA)=0.5$ for all $\cA\in\arguments$ and is therefore neutral. It follows that $\allProbT{\mathsf{NEU}}{\AF}\neq \emptyset$ for every $\AF$. Furthermore, if $P\in\allProbT{\mathsf{NEU}}{\AF}$ then for every $\cA,\cB\in\arguments$, if $\cA \attacks \cB$ we have trivially $P(\cA)= 1-P(\cB)$, so $P\in\allProbT{\mathsf{INV}}{\AF}$ and then also $P(\cA)\leq 1-P(\cB)$, i.\,.e, $P\in\allProbT{\mathsf{COH}}{\AF}$. Finally, for $\AFcomplete$ with $\arguments=\{\cA,\cB\}$ and $\attacks=\{(\cA,\cB)\}$ any probability function $P$ with $P(\cA)=0.4$ and $P(\cB)=0.4$ is coherent but not involutary. 
	\end{proof}
\end{proposition}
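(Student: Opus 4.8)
The plan is to handle all five items by the same three-step recipe: show that the smaller class is nonempty, verify the claimed inclusion by unwinding the two definitions involved, and then exhibit a concrete framework in which the inclusion is strict. The nonemptiness and inclusion parts hold uniformly for every \AF, but the strictness claims cannot: for an attack-free framework every attack-based property (\textsf{COH}, \textsf{RAT}, \textsf{INV}, \textsf{SOPT}, \textsf{OPT}) is satisfied vacuously by all of \allProb{\AF}, so those classes collapse to one another and no proper inclusion survives. I would therefore witness each strict inclusion with a framework containing an attack, and the two-argument framework \AF with $\arguments=\{\cA,\cB\}$ and $\cA\attacks\cB$ suffices throughout.

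For the nonemptiness parts I would point to three explicit functions. The uniform function $P(E)=1/2^{|\arguments|}$ for all $E\subseteq\arguments$ gives $P(\cA)=0.5$ for every $\cA$, so $\allProbT{\mathsf{NEU}}{\AF}\neq\emptyset$ (item 2) and a fortiori $\allProbT{\mathsf{INV}}{\AF}\neq\emptyset$ (item 3); the function concentrating all mass on $\emptyset$ yields $P(\cA)=0$ everywhere, witnessing $\allProbT{\mathsf{MIN}}{\AF}\neq\emptyset$ (item 4); and the one concentrating all mass on \arguments\ yields $P(\cA)=1$ everywhere, witnessing $\allProbT{\mathsf{MAX}}{\AF}\neq\emptyset$ (item 5). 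The inclusions then reduce to short implications read straight off the definitions: $P(\cA)=0.5$ together with $P(\cB)=0.5$ gives $P(\cA)=1-P(\cB)$, so $\mathsf{NEU}\subseteq\mathsf{INV}$; $P(\cA)=1-P(\cB)$ implies $P(\cA)\le 1-P(\cB)$, so $\mathsf{INV}\subseteq\mathsf{COH}$; $P(\cA)\le 1-P(\cB)$ implies $P(\cA)>0.5\Rightarrow P(\cB)<0.5$, so $\mathsf{COH}\subseteq\mathsf{RAT}$ (item 1); and $\mathsf{MIN}\subseteq\mathsf{COH}$, $\mathsf{MAX}\subseteq\mathsf{OPT}$ are equally direct, using that under \textsf{MAX} the bound $1-\sum_{\cB\in\attackers{\AF}{\cA}}P(\cB)$ equals $1$ for unattacked $\cA$ (met with equality) and is nonpositive once $\cA$ has an attacker.

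The one inclusion that takes a genuine, if brief, computation is $\allProbT{\mathsf{INV}}{\AF}\subseteq\allProbT{\mathsf{SOPT}}{\AF}$ in item 3, and I expect this to be the main obstacle. Under \textsf{INV} every attacker $\cB$ of a fixed $\cA$ satisfies $P(\cB)=1-P(\cA)$, so if $\cA$ has $k\ge 1$ attackers then $\sum_{\cB\in\attackers{\AF}{\cA}}P(\cB)=k(1-P(\cA))$, and the semi-optimistic inequality $P(\cA)\ge 1-k(1-P(\cA))$ rearranges to $(k-1)(1-P(\cA))\ge 0$, which holds because $k\ge 1$ and $P(\cA)\le 1$.

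Finally, for strictness I would verify one-line counterexamples on the framework $\cA\attacks\cB$, using that a probability function on $\{\cA,\cB\}$ with any prescribed marginals $P(\cA),P(\cB)$ exists (distribute the residual mass over $\emptyset$ and the two singletons). Concretely: $P(\cA)=P(\cB)=0.4$ is coherent but not involutary (item 2); $P(\cA)=0.5,\ P(\cB)=0.6$ is rational but not coherent, and $P(\cA)=P(\cB)=0.6$ fails rationality, supplying both strict steps of item 1; $P(\cA)=0.3,\ P(\cB)=0.8$ is semi-optimistic but not involutary (item 3); the neutral function is coherent but not minimal (item 4); and $P(\cA)=1,\ P(\cB)=0.5$ is optimistic (the forced value $P(\cA)=1$ on the unattacked $\cA$ is met, and $\cB$'s bound is $1-P(\cA)=0$) but not maximal (item 5). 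The remaining chain $\allProbT{\mathsf{JUS}}{\AF}\subsetneq\allProbT{\mathsf{COH}}{\AF}\subsetneq\allProb{\AF}$ of item 1 I would import from Proposition~\ref{prop:hasse1} and the references \cite{Thimm:2012,Hunter:2013}.
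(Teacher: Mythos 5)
Your proof is correct, and for the one item the paper actually proves (item 2) it coincides with the paper's own argument: the uniform distribution $P(E)=1/2^{|\arguments|}$ witnesses $\emptyset\subsetneq\allProbT{\mathsf{NEU}}{\AF}$, the inclusions $\allProbT{\mathsf{NEU}}{\AF}\subseteq\allProbT{\mathsf{INV}}{\AF}\subseteq\allProbT{\mathsf{COH}}{\AF}$ are the same one-line implications, and you even chose the identical counterexample $P(\cA)=P(\cB)=0.4$ on the framework $\cA\attacks\cB$. Where you go beyond the paper is in discharging what it dismisses as ``straightforward'': the rearrangement of semi-optimism to $(k-1)(1-P(\cA))\geq 0$ for $\allProbT{\mathsf{INV}}{\AF}\subseteq\allProbT{\mathsf{SOPT}}{\AF}$, the point masses on $\emptyset$ and on $\arguments$ witnessing $\mathsf{MIN}$ and $\mathsf{MAX}$, and an explicit counterexample for every strictness claim (including the $\mathsf{RAT}$ steps of item 1, which the paper's citations cannot actually cover since $\mathsf{RAT}$ is introduced only in this paper). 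Your observation that the strict inclusions cannot hold on attack-free frameworks is also a more careful reading of the statement than the paper offers; the paper implicitly concedes the point by proving strictness on a particular two-argument framework.

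Two minor repairs. First, $\mathsf{OPT}$ is \emph{not} vacuous on an attack-free framework: the empty sum forces $P(\cA)\geq 1$ on every (unattacked) argument, so there $\allProbT{\mathsf{OPT}}{\AF}=\allProbT{\mathsf{MAX}}{\AF}=\allProbT{\mathsf{FOU}}{\AF}$ rather than $\allProb{\AF}$. Your conclusion that $\allProbT{\mathsf{MAX}}{\AF}\subsetneq\allProbT{\mathsf{OPT}}{\AF}$ fails in that case still stands, but because the two classes coincide, not because both are all of $\allProb{\AF}$. Second, your recipe for realizing prescribed marginals---``distribute the residual mass over $\emptyset$ and the two singletons''---only works when $P(\cA)+P(\cB)\leq 1$, and this fails for several of your own witnesses, e.g.\ $(0.5,0.6)$, $(0.6,0.6)$, $(0.3,0.8)$ and $(1,0.5)$. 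The existence claim itself is true and easily patched: put mass $\max(0,\,P(\cA)+P(\cB)-1)$ on $\{\cA,\cB\}$, mass $P(\cA)$ resp.\ $P(\cB)$ minus that amount on the singletons, and the remainder on $\emptyset$. Neither slip affects the substance of the proof.
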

Together with Examples~\ref{Example:AF1} and \ref{Example:AF2} we obtain the strict classification of classes of probability functions as depicted in Figure~\ref{fig:classification}.
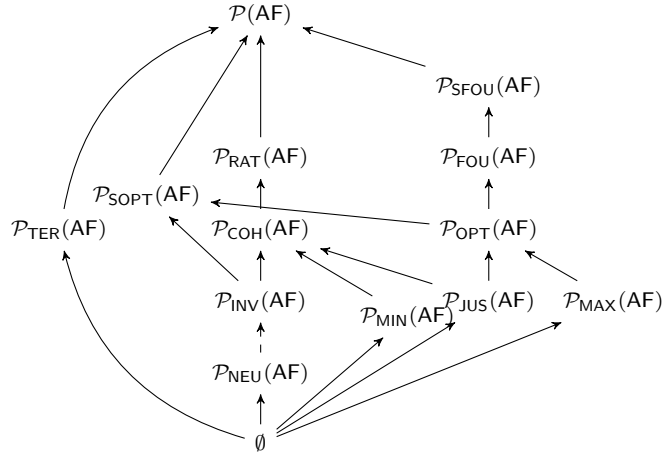
\begin{figure}[!t]
		\begin{center}
		\begin{tikzpicture}[node distance=0.4cm]
			\tikzstyle{every node}=[font=\scriptsize]
			\node(all){$\allProb{\AF}$};
			
			\node[below=of all,xshift=3cm](sfou){$\allProbT{\mathsf{SFOU}}{\AF}$};
			
			\node[below=of sfou,xshift=-3cm](rat){$\allProbT{\mathsf{RAT}}{\AF}$};
			\node[below=of sfou,xshift=-4.5cm,yshift=-0.5cm](sopt){$\allProbT{\mathsf{SOPT}}{\AF}$};
			\node[below=of sfou](fou){$\allProbT{\mathsf{FOU}}{\AF}$};
			
			\node[below=of rat](coh){$\allProbT{\mathsf{COH}}{\AF}$};
			\node[below=of fou](opt){$\allProbT{\mathsf{OPT}}{\AF}$};
			
			\node[below=of opt](jus){$\allProbT{\mathsf{JUS}}{\AF}$};
			\node[below=of coh](inv){$\allProbT{\mathsf{INV}}{\AF}$};
			\node[below=of inv](neu){$\allProbT{\mathsf{NEU}}{\AF}$};

			\node[left=of coh,xshift=-0.7cm](ter){$\allProbT{\mathsf{TER}}{\AF}$};

			\node[right=of neu, yshift=0.75cm](min){$\allProbT{\mathsf{MIN}}{\AF}$};
			\node[right=of jus, xshift=-0.3cm](max){$\allProbT{\mathsf{MAX}}{\AF}$};

			\node[below=of neu](empty){$\emptyset$};
					
			\path(rat) edge [->] (all);
			\path(sopt) edge [->] (all);
			\path(fou) edge [->] (sfou);
			\path(sfou) edge [->] (all);
			\path(coh) edge [->] (rat);
			\path(opt) edge [->] (sopt);
			\path(opt) edge [->] (fou);
			\path(jus) edge [->] (coh);
			\path(jus) edge [->] (opt);
			\path(inv) edge [->] (coh);
			\path(neu) edge [->,dashed] (inv);
			\path(inv) edge [->] (sopt);
			\path(empty) edge [->, bend left] (ter);
			\path(ter) edge [->, bend left] (all);
			\path(empty) edge [->] (jus);
			\path(empty) edge [->] (neu);
			\path(empty) edge [->] (min);
			\path(empty) edge [->] (max);
			\path(max) edge [->] (opt);
			\path(min) edge [->] (coh);
		\end{tikzpicture}
	\end{center}
	\caption{Classes of probability functions (a normal arrow $\rightarrow$ indicates a strict subset relation, a dashed arrow $\dashrightarrow$ indicates a subset relation)}
	\label{fig:classification}
\end{figure}

The \textsf{RAT} constraint is a weaker version of the \textsf{COH} constraint, and it can be used to capture each admissible labelling as a probability function. 

\begin{proposition}
If $L$ is an admissible labelling, then there is a $P \in \allProbT{\mathsf{RAT}}{\AF}$ such that $L \sim P$.
\end{proposition}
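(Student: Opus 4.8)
The plan is to construct, for a given admissible labelling $L$, a probability function $P$ that is congruent with $L$ and then verify that this $P$ satisfies \textsf{RAT}. Recall that congruence $L\sim P$ pins down the marginal $P(\cA)$ for every argument: it must equal $1$ when $L(\cA)=\argin$, $0$ when $L(\cA)=\argout$, and $0.5$ when $L(\cA)=\argundec$. In particular, congruence forces $P(\cA)\in\{0,0.5,1\}$ for every $\cA$, so the function I build will automatically be ternary.

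First I would exhibit an explicit probability function realizing these marginals. Writing $I=\argin(L)$ and $U=\argundec(L)$, I would let $P$ place mass $1/2^{|U|}$ on each set of the form $I\cup S$ with $S\subseteq U$, and mass $0$ on every other subset of $\arguments$. This is a well-defined probability function, and summing the probabilities of all sets containing a fixed argument $\cA$ yields $P(\cA)=1$ for $\cA\in I$, $P(\cA)=0.5$ for $\cA\in U$, and $P(\cA)=0$ for $\cA\in\argout(L)$, exactly matching the congruence requirement. Hence $L\sim P$.

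Next I would check rationality. Suppose $\cA\attacks\cB$ and $P(\cA)>0.5$. Since $P(\cA)\in\{0,0.5,1\}$, this forces $P(\cA)=1$, i.e.\ $L(\cA)=\argin$. Now $\cA$ is an attacker of $\cB$ with $L(\cA)\neq\argout$, so the second admissibility condition—whose contrapositive states that an argument possessing a non-$\argout$ attacker cannot be labelled $\argin$—gives $L(\cB)\neq\argin$, and therefore $P(\cB)\in\{0,0.5\}$, so in particular $P(\cB)\leq 0.5$. This is precisely the \textsf{RAT} condition, whence $P\in\allProbT{\mathsf{RAT}}{\AF}$.

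I expect the only real content to lie in this last step, and it is short: everything hinges on the contrapositive of admissibility clause 2, together with the observation that congruence already makes $P$ ternary so that $P(\cA)>0.5$ degenerates to $P(\cA)=1$. A shortcut is also available—Proposition~\ref{prop:standard1}(1) already provides a $P\in\allProbT{\mathsf{JUS}}{\AF}$ with $L\sim P$, and Proposition~\ref{prop:hasse2}(1) gives $\allProbT{\mathsf{JUS}}{\AF}\subseteq\allProbT{\mathsf{RAT}}{\AF}$, so $P\in\allProbT{\mathsf{RAT}}{\AF}$ immediately. I would nonetheless prefer the self-contained construction above, since it makes the role of admissibility transparent and avoids routing through the stronger (and more delicate) justifiability property.
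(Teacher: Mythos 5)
Your main construction is correct. Note first that the paper states this proposition without any proof at all, so there is no official argument to compare against line by line; the surrounding remark (``the \textsf{RAT} constraint \ldots can be used to capture each admissible labelling'') is only a gloss. Your argument supplies exactly the missing content: the uniform distribution over the sets $I\cup S$ with $S\subseteq\argundec(L)$ is a well-defined probability function whose marginals are $1$ on $\argin(L)$, $0.5$ on $\argundec(L)$, and $0$ on $\argout(L)$, so $L\sim P$; and since congruence forces $P$ to be ternary, $P(\cA)>0.5$ implies $P(\cA)=1$, hence $L(\cA)=\argin$, and the contrapositive of admissibility clause 2 applied to $\cB$ (an argument with a non-$\argout$ attacker cannot be $\argin$) yields $P(\cB)\leq 0.5$. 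That is precisely \textsf{RAT}.

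One caution about the ``shortcut'' you mention: it is not actually available, and your preference for the self-contained route is justified by more than transparency. Routing through Proposition~\ref{prop:standard1}(1) and the inclusion $\allProbT{\mathsf{JUS}}{\AF}\subseteq\allProbT{\mathsf{RAT}}{\AF}$ would establish the stronger claim that every admissible labelling is congruent to a \emph{coherent} probability function, and under this paper's definition of congruence that claim is false. Take $\arguments=\{\cA,\cB\}$ with $\cA\attacks\cB$, $L(\cA)=\argin$, $L(\cB)=\argundec$: this $L$ is admissible (clause 1 is vacuous since nothing is labelled $\argout$, and clause 2 is vacuous since the only $\argin$ argument, $\cA$, is unattacked), yet any congruent $P$ must have $P(\cA)=1$ and $P(\cB)=0.5$, violating \textsf{COH}, which would require $P(\cA)\leq 1-P(\cB)=0.5$. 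So Proposition~\ref{prop:standard1}(1), read with this paper's congruence relation, fails for admissible labellings that are not complete (it is imported from earlier work under different conventions). This tension is exactly why the weaker property \textsf{RAT} is what is needed to capture admissibility, and why your direct construction, which never passes through \textsf{COH} or \textsf{JUS}, is the right proof.
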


Furthermore, the epistemic labelling corresponding to each probability function that satisfies the \textsf{RAT} property is conflict-free.

\begin{proposition}
Let $\AFcomplete$ be an abstract argumentation framework. 
For each $P \in P_{\textsf{RAT}}(AF)$, ${\sf in}(L_P)$ is a conflict-free set of arguments in $\AF$.
\begin{proof}
Assume $P \in \allProbT{\mathsf{RAT}}{\AF}$.
So for each $\cA \rightarrow \cB$, either $P(A) \leq 0.5$ or $P(B) \leq 0.5$.
Therefore, for each $\cA \rightarrow \cB$, if $L_P(\cA) = \argin$, then $L_P(\cB) \neq \argin$,
and if $L_P(\cB) = \argin$, then $L_P(\cA) \neq \argin$.
Hence, ${\sf in}(L_P)$ is a conflictfree set of arguments in $\AF$.
\end{proof}
\end{proposition}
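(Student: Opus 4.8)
The plan is to argue directly from the definitions of the \textsf{RAT} property, the epistemic labelling $L_P$, and conflict-freeness; the whole argument is a short chain of equivalences with no additional machinery. Recall that a labelling is conflict-free precisely when no attack $\cA \attacks \cB$ has both of its endpoints in the in-set, so the goal reduces to showing that whenever $\cA \attacks \cB$ we cannot have both $L_P(\cA)=\argin$ and $L_P(\cB)=\argin$.

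First I would fix an arbitrary $P \in \allProbT{\mathsf{RAT}}{\AF}$ and an arbitrary attack $\cA \attacks \cB$. The key step is to rewrite the \textsf{RAT} condition $P(\cA) > 0.5 \Rightarrow P(\cB) \leq 0.5$ in its equivalent disjunctive form: it asserts exactly that $P(\cA) > 0.5$ and $P(\cB) > 0.5$ cannot hold simultaneously, i.e., at least one endpoint of every attack carries probability at most $0.5$. I would then invoke the definition of the epistemic labelling, under which $L_P(\cA)=\argin$ holds iff $P(\cA) > 0.5$ (and likewise for $\cB$); membership in the in-set is therefore governed by exactly the same threshold $0.5$ that appears in \textsf{RAT}. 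Combining the two, the impossibility of $P(\cA) > 0.5 \wedge P(\cB) > 0.5$ transfers verbatim to the impossibility of $L_P(\cA)=\argin \wedge L_P(\cB)=\argin$.

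To finish, since the attack $\cA \attacks \cB$ was chosen arbitrarily, no attack can run between two arguments that are both labelled $\argin$, which is exactly the definition of $\argin(L_P)$ being conflict-free. There is no genuine obstacle in this proof; the only point deserving care is the logical handling of the \textsf{RAT} implication, so as not to reverse its direction. It is worth observing that \textsf{RAT} constrains only the in-labelled arguments (those with probability strictly above $0.5$) and places no restriction on $\argout$- or $\argundec$-labelled arguments, which is precisely why conflict-freeness of the in-set — rather than a stronger admissibility-type property — is the correct conclusion to expect.
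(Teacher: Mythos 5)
Your proof is correct and follows essentially the same route as the paper's: both rewrite the \textsf{RAT} implication as the disjunction that at least one endpoint of every attack has probability at most $0.5$, then translate this through the threshold definition of the epistemic labelling to rule out two $\argin$-labelled endpoints, which is exactly conflict-freeness. Your added remark about the one-directional nature of \textsf{RAT} is a fair observation but does not change the argument.
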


When the argument graph has odd cycles, there is no probability function that is involutary, apart from a neutral probability function. 
\begin{proposition}
Let $\AFcomplete$ be an abstract argumentation framework. 
If $\AF$ contains an odd cycle (i.\,e. there is a sequence of attacks $A_1 \rightarrow A_2 \rightarrow ...... \rightarrow A_k$ where $A_1 = A_k$ and $k$ is an even number), 
and $P \in \allProbT{\mathsf{INV}}{\AF}$
then $P \in P_{\mathsf{NEU}}(\AF)$.
\begin{proof}
Assume that there is a sequence of attacks $A_1 \rightarrow A_2 \rightarrow ...... \rightarrow A_k$ where $A_1 = A_k$ and $k$ is an even number.
Let $P(A_1) = \alpha$.
Hence, $P(A_2) = 1 - \alpha$, $P(A_3) = \alpha$, \ldots, $P(A_{k-1}) = \alpha$, $P(A_{k}) = 1- \alpha$.
Therefore,  $P(A_1) = \alpha$ and $P(A_{k}) = 1- \alpha$.
Yet $A_1 = A_k$.
This is only possible if $\alpha = 0.5$.
Hence, $P \in P_{\mathsf{NEU}}(\AF)$.
\end{proof}
\end{proposition}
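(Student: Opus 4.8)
The plan is to exploit the involutary constraint edge-by-edge along the given cycle, turning it into a parity argument. First I would fix an arbitrary argument on the cycle, say $A_1$, and write $P(A_1) = \alpha$ for some $\alpha \in [0,1]$. Since \textsf{INV} requires $P(\cA) = 1 - P(\cB)$ for every attack $\cA \attacks \cB$, walking one step along the cycle flips the value: from $A_1 \attacks A_2$ I get $P(A_2) = 1 - \alpha$, from $A_2 \attacks A_3$ I get $P(A_3) = \alpha$, and so on. A routine induction on the position $i$ then gives $P(A_i) = \alpha$ when $i$ is odd and $P(A_i) = 1 - \alpha$ when $i$ is even.

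The forced value comes from the parity of the cycle. Because the cycle closes up with $A_1 = A_k$ and $k$ is even, the position formula gives $P(A_k) = 1 - \alpha$, while the identification $A_1 = A_k$ forces $P(A_k) = P(A_1) = \alpha$. Equating the two yields $\alpha = 1 - \alpha$, so $\alpha = 0.5$, and hence every argument lying on the cycle must receive probability exactly $0.5$. Note that the hypothesis that $k$ is even is precisely what makes the number of attacks $k-1$ odd, so that the two-valued ``flip'' assignment $\{\alpha, 1-\alpha\}$ induced along the cycle is inconsistent unless $\alpha = 0.5$; an even cycle would impose no such constraint.

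The remaining step is to pass from ``neutral on the cycle'' to $P \in \allProbT{\mathsf{NEU}}{\AF}$, i.e.\ $P(\cA) = 0.5$ for every $\cA \in \arguments$. The idea is to propagate the value $0.5$ outward from the cycle along attack edges: any argument adjacent (in the underlying undirected attack graph) to an argument already known to be $0.5$ is itself forced by \textsf{INV} to $1 - 0.5 = 0.5$, and one continues until the entire connected component of the cycle is pinned down; since that component is non-bipartite, the constant value $0.5$ is the only assignment consistent with the flip relation. This is where I expect the main obstacle to lie, because \textsf{INV} relates only arguments joined by an attack: the propagation reaches exactly those arguments connected to the cycle through a chain of attacks, so the conclusion as stated is clean only when the attack graph is connected (otherwise an argument in a separate component, for instance an isolated argument, is entirely unconstrained by \textsf{INV} and may take any value). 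I would therefore either add a connectivity hypothesis or restate the conclusion as neutrality on the connected component containing the odd cycle; delimiting this global-from-local step is the crux, whereas the cycle computation itself is elementary.
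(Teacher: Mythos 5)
Your cycle computation is exactly the paper's proof: the same walk along the attacks, flipping the value at each step, and the same parity observation that $A_1 = A_k$ with $k$ even (hence an odd number $k-1$ of attacks) forces $\alpha = 1-\alpha$, so $\alpha = 0.5$. The difference lies in what happens afterwards. The paper's proof stops right there and writes ``Hence $P \in \allProbT{\mathsf{NEU}}{\AF}$'' with no propagation step at all, so the difficulty you isolate in your final paragraph is not an artefact of your write-up but a genuine hole in the paper's own argument --- and your diagnosis is correct: as stated, the proposition is false for disconnected frameworks. Concretely, take the odd cycle $\cA \attacks \cB$, $\cB \attacks \cC$, $\cC \attacks \cA$ together with a fourth, isolated argument $\mathcal{D}$; any probability function with marginals $P(\cA) = P(\cB) = P(\cC) = 0.5$ and $P(\mathcal{D}) = 0.7$ (e.g.\ the independent product distribution) satisfies \textsf{INV} --- vacuously at $\mathcal{D}$, and by the cycle computation elsewhere --- yet is not neutral. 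So one of your two repairs is indeed required: either assume the underlying undirected attack graph is connected, in which case your outward propagation finishes the proof (note that once any single argument carries value $0.5$, \textsf{INV} forces $0.5$ on all of its neighbours, so the non-bipartiteness of the component is only needed to pin the cycle itself to $0.5$, not for the propagation), or weaken the conclusion to $P(\cA) = 0.5$ for every $\cA$ in the connected component containing the odd cycle. With either repair your proof is complete, and it is strictly more careful than the one in the paper.
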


Even when the graph is acyclic, it may be the case that there is no involutary probability function (apart from the neutral probability function). Consider for example an argument graph containing three arguments $\cA$, $\cB$ and $\cC$, with $\cA$ attacking both $\cB$ and $\cC$, and $\cB$ attacking $\cC$. For this, there is no involutary probability function  (apart from the neutral probability function). If we restrict consideration to trees, then we are guaranteed to have a probability function that is involutary and not neutral. But even here there are constraints such as siblings have to have the same assignment as captured in the next result.

\begin{proposition}
If $P \in \allProbT{\mathsf{INV}}{\AF}$, then for all $\cB_i,\cB_j \in\attackers{\AF}{\cA}$ we have $P(\cB_i) = P(\cB_j)$.
\begin{proof}
Let $\cB_i \rightarrow \cA$ and $\cB_j \rightarrow \cA$ be attacks.
Assume $P \in \allProbT{\mathsf{INV}}{\AF}$.
Therefore, $P(\cA) = 1 - P(\cB_i)$ and $P(\cA) = 1 - P(\cB_j)$.
Hence, $P(\cB_i) = P(\cB_j)$.
\end{proof}
\end{proposition}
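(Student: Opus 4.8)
The plan is to apply the \textsf{INV} constraint to each of the two relevant attacks separately and then eliminate the common term. The key observation is that $\cB_i, \cB_j \in \attackers{\AF}{\cA}$ means precisely that both $\cB_i \attacks \cA$ and $\cB_j \attacks \cA$ hold in $\AF$; in other words, $\cB_i$ and $\cB_j$ are ``siblings'' sharing the common attackee $\cA$. So the whole proof rests on the defining equation of \textsf{INV}, namely that an attack $\cA' \attacks \cB'$ forces $P(\cA') = 1 - P(\cB')$, instantiated twice.

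Concretely, I would first instantiate the universally quantified \textsf{INV} property with the attack $\cB_i \attacks \cA$, taking $\cB_i$ in the attacker role and $\cA$ in the attackee role, which yields $P(\cB_i) = 1 - P(\cA)$. Repeating the identical step for the attack $\cB_j \attacks \cA$ gives $P(\cB_j) = 1 - P(\cA)$. Both equations share the right-hand side $1 - P(\cA)$, so equating the two left-hand sides by transitivity of equality immediately gives $P(\cB_i) = P(\cB_j)$, which is the claim.

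There is essentially no obstacle here: the argument is a double instantiation of the \textsf{INV} equation followed by a single cancellation. The only point demanding care is the direction of the \textsf{INV} equation. Since \textsf{INV} is stated for an ordered pair (attacker, attackee), one must instantiate with $\cB_i$ (respectively $\cB_j$) as the \emph{first} coordinate and the common attackee $\cA$ as the second, so that the shared quantity $P(\cA)$ occupies the same position in both equations and cancels cleanly; instantiating in the wrong order would misplace $P(\cA)$ and obscure the cancellation. This result also formalises the ``siblings must agree'' phenomenon noted just before the statement, and it helps explain why involutary yet non-neutral assignments remain heavily constrained even on trees.
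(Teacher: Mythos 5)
Your proof is correct and follows essentially the same route as the paper's: instantiate the \textsf{INV} equation once for each attack $\cB_i \attacks \cA$ and $\cB_j \attacks \cA$, then cancel the common term $1 - P(\cA)$ (note that since $P(\cA') = 1 - P(\cB')$ rearranges to $P(\cB') = 1 - P(\cA')$, the directional care you describe is harmless but not actually needed).
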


When $P \in \allProbT{\mathsf{MAX}}{\AF}$, the probability function does not take the structure of the graph into account. Hence, there is an incompatibility between a probability function being maximal and a probability function being either rational or coherent (as shown in the proposition below). However, there is compatibility between a probability function being maximal and a probability function being founded since each $P \in \allProbT{\mathsf{MAX}}{\AF}$ is in $\allProbT{\mathsf{FOU}}{\AF}$. 

%%%%In other words, in general, if a probability function is maximally committed is not rational and therefore not coherent.

\begin{proposition}
Let $\AFcomplete$ be an abstract argumentation framework. 
If there are $\cA,\cB\in \arguments$ such that $\cA \rightarrow \cB$, 
then $\allProbT{\mathsf{RAT}}{AF} \cap \allProbT{\mathsf{MAX}}{AF} = \emptyset$.
\end{proposition}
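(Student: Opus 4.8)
The plan is to argue by contradiction, exploiting the tension between the two defining conditions directly. Suppose, for the sake of contradiction, that some probability function $P$ lies in $\allProbT{\mathsf{RAT}}{\AF} \cap \allProbT{\mathsf{MAX}}{\AF}$. The hypothesis gives us arguments $\cA,\cB\in\arguments$ with $\cA\attacks\cB$, and this single attack is all we will need.

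First I would invoke the \textsf{MAX} property: since $P\in\allProbT{\mathsf{MAX}}{\AF}$, we have $P(\cC)=1$ for every $\cC\in\arguments$, and in particular $P(\cA)=1$ and $P(\cB)=1$. Next I would invoke the \textsf{RAT} property applied to the attack $\cA\attacks\cB$: rationality demands that $P(\cA)>0.5$ implies $P(\cB)\leq 0.5$. Since $P(\cA)=1>0.5$, the antecedent holds, so we must have $P(\cB)\leq 0.5$. This contradicts $P(\cB)=1$, and the contradiction establishes that the intersection is empty.

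There is no genuine obstacle here: the result is an immediate incompatibility between the two constraints whenever the attack relation is non-empty, so the proof is a short chain of implications with no calculation to grind through. The only point worth stating carefully is that the existence of even a single attack $\cA\attacks\cB$ suffices to trigger the clash, which is exactly why the hypothesis of the proposition is phrased in terms of the mere existence of such a pair.
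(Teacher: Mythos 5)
Your proof is correct and follows essentially the same argument as the paper's: both invoke \textsf{MAX} to force $P(\cA)=P(\cB)=1$ on the attacked pair and then use \textsf{RAT} on the attack $\cA\attacks\cB$ to derive the incompatible bound $P(\cB)\leq 0.5$. The only difference is presentational --- you phrase it as a contradiction, while the paper states the two incompatible consequences directly --- which is immaterial.
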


\begin{proof}
Assume there is an attack $\cA \rightarrow \cB$.
So for all $P \in \allProbT{\mathsf{RAT}}{AF}$, 
if $P(\cA) = 1$, then $P(\cB) \leq 0.5$, and $P(\cB) = 1$, then $P(\cA) \leq 0.5$.
And for all $P \in \allProbT{\mathsf{MAX}}{AF} $, $P(\cA) = 1$ and $P(\cB) = 1$.
So $\allProbT{\mathsf{RAT}}{AF} \cap \allProbT{\mathsf{MAX}}{AF} = \emptyset$.
\end{proof}

In conclusion, we have identified epistemic extensions that are obtained from rational probability functions as being an appealing alternative to extensions obtained by Dung's definitions. Rational probability functions are more general than coherent probability functions, and allow the audience more flexibility in expressing their beliefs in the arguments whilst taking the structure of the argument graph into account. We have also considered alternatives such as the involutary probability functions but these are over-constrained.

%%%%%%%%%%%%%%%%%%%%%%%%%%%%%%%%%%%%%%%%%%%%%%%%%%%%%%%%%%%%%%%%%%%%%%%%%%%%
\section{Application: Partial probability functions}\label{section:partial}
%%%%%%%%%%%%%%%%%%%%%%%%%%%%%%%%%%%%%%%%%%%%%%%%%%%%%%%%%%%%%%%%%%%%%%%%%%%

Assigning a probability value to an argument can be useful for a variety of purposes such as representing the belief that the premises of the argument are valid, or the belief that the claim is valid given that the premises are valid, or the belief that both the premises and claim are valid, or the belief that the argument should appear in the argument graph, etc. 
However, given an argument graph, it may be difficult for a user to assign a value to every argument. The user might have knowledge in order to identify a value for some arguments, but the user may be unable or unwilling to make assignments to the remaining arguments. This means that the user can only provide a partial assignment. If this is the case, then it would be desirable to have techniques to handle this incomplete information. Ideally, we would like to identify an appropriate assignment for all the arguments based on the assignment to the subset of arguments. 

%%In others, for each partial probability function, we want to identify an appropriate probability function that complete the missing assignments. 

More specifically, a partial function $\pi:\arguments\rightarrow[0,1]$ on $\arguments$ is called a \emph{partial probability assignment}. A probability function $P\in\allProb{\arguments}$ is \emph{$\pi$-compliant} if for every $\cA\in\dom{\pi}$ we have $\pi(\cA)=P(\cA)$. Let $\allProbComp{\AF}{\pi}\subseteq\allProb{\AF}$ be the set of all $\pi$-compliant probability functions. The question that arises is that given an abstract argumentation framework $\AFcomplete$ and a partial probability assignment $\pi$, how do we determine $P\in\allProb{\arguments}$ that is most compatible with both $\AF$ and $\pi$, i.\,e., which $P\in\allProb{\arguments}$ do we select as a meaningful representative? This question has also been addressed in similar ways for partial probabilistic information without argumentation, cf.\ e.\,g. \cite{Paris:1994}. There, the principle of \emph{maximum entropy} has been used to complete incomplete probabilistic information in probabilistic logics. %Whilst we do not fully answer this question for probabilistic argumentation in this section, we take a first look how the approaches of 
%\cite{Goldszmidt:1993,Paris:1994,Kern-Isberner:2001a} 
As a first step, we investigate the properties of the sets of probability functions which are defined by our different axioms. An important requirement for applying maximum entropy approaches is that the probability function with maximum entropy is uniquely determined. A sufficient property to ensure this, is that the set under consideration is both \emph{convex} and \emph{closed}.\footnote{A set $X$ is \emph{convex} if for $x_{1},x_{2}\in X$ it also holds that $\delta x_{1}+(1-\delta)x_{2}\in X$ for every $\delta\in[0,1]$; a set $X$ is closed if for every converging sequence $x_{1},x_{2},\ldots$ with $x_{i}\in X$ ($i\in\mathbb{N}$) we have that $\lim_{i\rightarrow\infty}x_{i}\in X$} More generally, maximizing any strictly convex function over a convex set has a unique solution and also most interesting distance measures fall into this category. So if we are to identify probability functions that complete the missing assignments,  knowing that for specific sets of probability functions (i.\,e. those that satisfy specific axioms) that they are convex and closed, means that we may find an appropriate probability function.
\begin{proposition}\label{prop:incomplete}
	Let $\AFcomplete$ be an abstract argumentation framework.
	\begin{enumerate}
		\item The sets $\allProb{\AF}$, $\allProbT{\mathsf{COH}}{\AF}$, $\allProbT{\mathsf{TER}}{\AF}$, $\allProbT{\mathsf{NEU}}{\AF}$, $\allProbT{\mathsf{INV}}{\AF}$, $\allProbT{\mathsf{SFOU}}{\AF}$, $\allProbT{\mathsf{FOU}}{\AF}$, $\allProbT{\mathsf{OPT}}{\AF}$, $\allProbT{\mathsf{SOPT}}{\AF}$, $\allProbT{\mathsf{JUS}}{\AF}$, $\allProbT{\mathsf{MIN}}{\AF}$, $\allProbT{\mathsf{MAX}}{\AF}$ are convex and closed.
		\item The set $\allProbT{\mathsf{RAT}}{\AF}$ is not convex but closed.
	\end{enumerate}
	\begin{proof}~
	\begin{enumerate}
		\item Convexity and closure of $\allProb{\AF}$ has been shown in e.\,g. \cite{Paris:1994}, convexity and closure of $\allProbT{\mathsf{JUS}}{\AF}$ has been shown in \cite{Thimm:2012}. The convexity and closure of the other sets follow from the fact that they are defined using linear equations without strict inequality, cf.\ also \cite{Thimm:2012}.
		\item Let $\AFcomplete$ be given by $\arguments=\{\cA,\cB\}$ and $\attacks=(\cA,\cB)$. Consider $P_{1},P_{2}\in\allProbT{\mathsf{RAT}}{\AF}$ with
			\begin{align*}
				P_{1}(\cA) & = 1		& P_{1}(\cB) & = 0.4\\
				P_{2}(\cA) & = 0.4	& P_{2}(\cB) & = 0.8
			\end{align*}
			For the convex combination $P = 0.5 P_{1} + 0.5 P_{2}$ we obtain $P(\cA)=0.7$ and $P(\cB)=0.6$, i.\,e. $P\notin \allProbT{\mathsf{RAT}}{\AF}$. However, $\allProbT{\mathsf{RAT}}{\AF}$ is closed as for every converging sequence $P_{1},P_{2},\ldots$ with $P_{i}\in\allProbT{\mathsf{RAT}}{\AF}$ for all $i\in\mathbb{N}$ there is $N\in\mathbb{N}$ such that for all $\cA\attacks \cB$ either
			\begin{itemize}
				\item $P_{j}(\cA)\leq 0.5$ for all $j>N$; then $\lim_{i\rightarrow\infty} P_{j} (\cA)\leq 0.5$ as well and the condition of coherence is trivially satisfied, or
				\item $P_{j}(\cA)> 0.5$ and consequently $P_{j}(\cB)\leq 0.5$ for all $j>N$; then $\lim_{i\rightarrow\infty} P_{j} (\cB)\leq 0.5$ as well and the condition of coherence is satisfied in any case.
			\end{itemize}
	\end{enumerate}	
	\end{proof}
\end{proposition}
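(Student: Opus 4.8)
The plan is to reduce everything to elementary polyhedral geometry in the finite-dimensional space where a probability function lives. Since $\arguments$ is finite, I identify $P\in\allProb{\AF}$ with the point $(P(E))_{E\subseteq\arguments}$ in $\mathbb{R}^{2^{|\arguments|}}$, and $\allProb{\AF}$ is cut out by the weak inequalities $P(E)\geq 0$ together with the single equality $\sum_{E\subseteq\arguments}P(E)=1$; this is the standard simplex, which is convex and closed (the base case, cited to \cite{Paris:1994}). The observation that powers the rest is that each marginal $P(\cA)=\sum_{\cA\in E\subseteq\arguments}P(E)$ is a \emph{linear} functional of the coordinates, so any constraint expressed as a weak linear (in)equality in the quantities $P(\cA)$ is again a weak linear (in)equality in the ambient coordinates.

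First I would dispose of all the sets in item~1 whose defining property is a conjunction of weak linear (in)equalities in the marginals. Reading off the axioms: \textsf{COH} is $P(\cA)+P(\cB)\leq 1$ for each attack $\cA\attacks\cB$; \textsf{SFOU} is $P(\cA)\geq 0.5$ and \textsf{FOU} is $P(\cA)=1$ for each $\cA$ with $\attackers{\AF}{\cA}=\emptyset$; \textsf{SOPT}, resp.\ \textsf{OPT}, is $P(\cA)+\sum_{\cB\in\attackers{\AF}{\cA}}P(\cB)\geq 1$ for each attacked, resp.\ every, $\cA$; and \textsf{NEU}, \textsf{MIN}, \textsf{MAX}, \textsf{INV} are the equalities $P(\cA)=0.5$, $P(\cA)=0$, $P(\cA)=1$ (all $\cA$) and $P(\cA)+P(\cB)=1$ (each attack). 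Each such set is the intersection of the simplex $\allProb{\AF}$ with finitely many closed half-spaces and hyperplanes, hence a polyhedron, and polyhedra are convex and closed. For \textsf{JUS}, which by definition equals $\allProbT{\mathsf{COH}}{\AF}\cap\allProbT{\mathsf{OPT}}{\AF}$, I would simply invoke that a finite intersection of convex closed sets is convex and closed (or cite \cite{Thimm:2012}). This settles every set listed in item~1 except \textsf{TER}.

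For item~2 I would treat \textsf{RAT} directly. Non-convexity is exhibited by the two-argument framework $\arguments=\{\cA,\cB\}$, $\attacks=\{(\cA,\cB)\}$ with rational functions $P_1(\cA)=1$, $P_1(\cB)=0.4$ and $P_2(\cA)=0.4$, $P_2(\cB)=0.8$: their midpoint has $P(\cA)=0.7>0.5$ and $P(\cB)=0.6>0.5$, violating the rationality implication on $\cA\attacks\cB$. For closure I would argue by limits: if $P_i\to P$ with all $P_i\in\allProbT{\mathsf{RAT}}{\AF}$ and $\cA\attacks\cB$ is fixed, then either $P(\cA)\leq 0.5$, so the implication is vacuous, or $P(\cA)>0.5$, whence $P_i(\cA)>0.5$ for all large $i$, forcing $P_i(\cB)\leq 0.5$ and hence $P(\cB)\leq 0.5$ in the limit; as there are finitely many attacks, $P\in\allProbT{\mathsf{RAT}}{\AF}$.

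The step I expect to be the genuine obstacle is \textsf{TER}, and it is the one place where I would not follow the blanket ``linear equations without strict inequality'' justification. Its condition $P(\cA)\in\{0,0.5,1\}$ is a \emph{disjunction} of linear equalities, so $\allProbT{\mathsf{TER}}{\AF}$ is the union, over the (at most) $3^{|\arguments|}$ value-assignments, of the corresponding affine slices of the simplex. Being a finite union of closed sets it is closed, so the closure claim survives; but convexity does not, already for a single unattacked argument $\cA$, where the ternary functions with $P(\cA)=0$ and with $P(\cA)=0.5$ have midpoint $P(\cA)=0.25\notin\{0,0.5,1\}$. I would therefore flag \textsf{TER} as the case needing separate handling, concluding that it is closed but not convex and hence belongs with \textsf{RAT} rather than among the convex sets in item~1.
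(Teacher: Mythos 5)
Your proof is correct, and for every set in item~1 except $\allProbT{\mathsf{TER}}{\AF}$, as well as for item~2, it takes essentially the same route as the paper: the sets defined by conjunctions of weak linear constraints on the marginals ($\mathsf{COH}$, $\mathsf{SFOU}$, $\mathsf{FOU}$, $\mathsf{SOPT}$, $\mathsf{OPT}$, $\mathsf{NEU}$, $\mathsf{INV}$, $\mathsf{MIN}$, $\mathsf{MAX}$, and $\mathsf{JUS}$ as an intersection) are intersections of the probability simplex with closed half-spaces and hyperplanes, hence convex and closed --- this is exactly what the paper compresses into ``defined using linear equations without strict inequality.'' Your non-convexity witness for $\allProbT{\mathsf{RAT}}{\AF}$ uses the same framework and the same numbers as the paper's, and your closure argument for $\allProbT{\mathsf{RAT}}{\AF}$ is the same finite-attack limit argument; in fact yours is slightly cleaner, because you case-split on the limit value of $P(\cA)$ rather than on the tail behaviour of the sequence. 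The paper's dichotomy (that for some $N$ each sequence $P_j(\cA)$ is eventually entirely on one side of $0.5$) is not exhaustive --- it fails for sequences oscillating around a limit of exactly $0.5$ --- whereas your version has no such gap.

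Where you part company with the paper is $\allProbT{\mathsf{TER}}{\AF}$, and there you are right and the paper's statement is wrong. The paper lists $\allProbT{\mathsf{TER}}{\AF}$ among the convex sets and covers it with the blanket linear-constraints justification, but the ternary condition $P(\cA)\in\{0,0.5,1\}$ is a \emph{disjunction} of equalities, so $\allProbT{\mathsf{TER}}{\AF}$ is a finite union of affine slices of the simplex: closed, but not convex whenever $\arguments\neq\emptyset$. Your counterexample is valid and can be made fully explicit for any such framework: let $P_1(\emptyset)=1$ (so every marginal is $0$) and $P_2(\emptyset)=P_2(\arguments)=0.5$ (so every marginal is $0.5$); both are ternary, yet their midpoint assigns every argument probability $0.25$. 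So the correct classification places $\allProbT{\mathsf{TER}}{\AF}$ alongside $\allProbT{\mathsf{RAT}}{\AF}$ as closed but not convex, and item~1 of the proposition should be amended accordingly. This also matters downstream: the paper's proposal to complete partial probability assignments by maximum entropy relies on convexity for uniqueness of the maximizer, so $\mathsf{TER}$ (like $\mathsf{RAT}$) must be excluded from that argument.
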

The above proposition suggests that most of our properties are suitable for convex optimization problems such as maximum entropy estimation. Moreover, the same is true for the notion of partial probability assignments.
\begin{proposition}\label{prop:convexity:partial}
	For every partial probability assignment $\pi$ the set $\allProbComp{\AF}{\pi}$ is convex and closed.
	\begin{proof}
		Let $P_{1},P_{2}\in\allProbComp{\AF}{\pi}$ and consider the convex combination $P=\delta P_{1}+(1-\delta)P_{2}$ for some $\delta\in[0,1]$. For every $\cA\in\dom{\pi}$ we have $P(\cA)=\delta P_{1}(\cA) + (1-\delta) P_{2}(\cA) = \delta \pi(\cA) + (1-\delta) \pi(\cA)=\pi(\cA)$ and therefore $P\in\allProbComp{\AF}{\pi}$. Closure of $\allProbComp{\AF}{\pi}$ is straightforward.
	\end{proof}
\end{proposition}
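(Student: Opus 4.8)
The plan is to prove convexity and closure of $\allProbComp{\AF}{\pi}$ directly from the definition, exploiting the fact that $\pi$-compliance is expressed entirely through linear equality constraints on the probability function. Recall that $P\in\allProbComp{\AF}{\pi}$ iff $P\in\allProb{\AF}$ and $P(\cA)=\pi(\cA)$ for every $\cA\in\dom{\pi}$. Since $\allProb{\AF}$ itself is convex and closed (shown in \cite{Paris:1994} and invoked in Proposition~\ref{prop:incomplete}), it suffices to show that intersecting with the additional constraints $P(\cA)=\pi(\cA)$ preserves both properties.

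For convexity, I would take two functions $P_{1},P_{2}\in\allProbComp{\AF}{\pi}$ and an arbitrary $\delta\in[0,1]$, form the convex combination $P=\delta P_{1}+(1-\delta)P_{2}$, and verify that $P$ is again $\pi$-compliant. The key observation is that the map $P\mapsto P(\cA)$ is linear, because $P(\cA)=\sum_{\cA\in E\subseteq\arguments}P(E)$ is a finite sum of the values $P(E)$, and convex combination of probability functions is taken componentwise on the power set. Hence for each $\cA\in\dom{\pi}$ one gets $P(\cA)=\delta P_{1}(\cA)+(1-\delta)P_{2}(\cA)=\delta\pi(\cA)+(1-\delta)\pi(\cA)=\pi(\cA)$, so the defining equality is retained. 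Combined with the already-established fact that $P\in\allProb{\AF}$ (a convex combination of probability functions on $\arguments$ is again a probability function on $\arguments$), this shows $P\in\allProbComp{\AF}{\pi}$.

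For closure, I would observe that $\allProbComp{\AF}{\pi}$ is a finite intersection of closed sets: it is $\allProb{\AF}$ intersected, over each $\cA\in\dom{\pi}$, with the preimage $\{P : P(\cA)=\pi(\cA)\}$ of the closed singleton $\{\pi(\cA)\}$ under the continuous (indeed linear) evaluation map $P\mapsto P(\cA)$. Each such preimage is closed, and a finite intersection of closed sets is closed; since $\dom{\pi}\subseteq\arguments$ is finite, we are done. Alternatively one can argue sequentially: for a convergent sequence $P_{1},P_{2},\ldots$ in $\allProbComp{\AF}{\pi}$ with limit $P$, continuity of evaluation gives $P(\cA)=\lim_{i\to\infty}P_{i}(\cA)=\pi(\cA)$ for each $\cA\in\dom{\pi}$, and $P\in\allProb{\AF}$ by closure of $\allProb{\AF}$.

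There is no real obstacle here; the result is essentially a consequence of the linearity of the constraints, in sharp contrast to the $\allProbT{\mathsf{RAT}}{\AF}$ case of the previous proposition where the defining condition involves an implication (a disjunction of inequalities) and convexity genuinely fails. The only point that warrants a sentence of care is justifying that $P\mapsto P(\cA)$ is linear and continuous, which follows immediately from the abbreviation $P(\cA)=\sum_{\cA\in E\subseteq\arguments}P(E)$ and the finiteness of $\powerset{\arguments}$; everything else is routine, which is presumably why the authors dismiss the closure part as ``straightforward''.
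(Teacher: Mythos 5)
Your proof is correct and follows essentially the same route as the paper: the identical linearity computation $P(\cA)=\delta P_{1}(\cA)+(1-\delta)P_{2}(\cA)=\pi(\cA)$ for convexity, with closure handled by the continuity of the evaluation maps $P\mapsto P(\cA)$ — precisely the detail the paper dismisses as ``straightforward.'' Your added care in noting that the convex combination stays in $\allProb{\AF}$ and that $\dom{\pi}$ is finite only makes explicit what the paper leaves implicit.
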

Let $t$ be any one of our properties which lead to a convex and closed set of probability functions (or any combination of those). If it is the case that there is at least one $\pi$-compliant $P$ in $\allProbT{t}{\AF}$ then (thanks to the convexity properties) we have that the intersection of $\allProbComp{\AF}{\pi}$ and $\allProbT{t}{\AF}$ is convex and closed as well, cf.\ \cite{Paris:1994}. In that case, we can select the probability function with maximal entropy within this intersection (which is uniquely defined). As for the rationale of this decision, several results from probability reasoning, as for example discussed in \cite{Paris:1994}, could be harnessed. %\cite{Paris:1990}\cite{Boyd:2004}
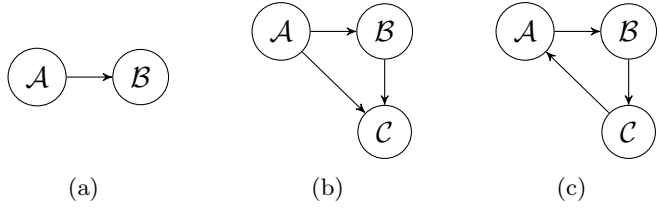
\begin{figure}
	\begin{center}
	\subfigure[]{
		\raisebox{7mm}{\begin{tikzpicture}[node distance=0.6cm]			
			\node[args](args1){$\cA$};
			\node[args, right=of args1](args2){$\cB$};		
			\path(args1) edge [->] (args2);
		\end{tikzpicture}}
		\label{fig:partialex:1}
	}\hspace*{0.7cm}
	\subfigure[]{
		\begin{tikzpicture}[node distance=0.6cm]			
			\node[args](args1){$\cA$};
			\node[args, right=of args1](args2){$\cB$};
			\node[args, below=of args2](args3){$\cC$};	
			\path(args1) edge [->] (args2);
			\path(args1) edge [->] (args3);
			\path(args2) edge [->] (args3);
		\end{tikzpicture}
		\label{fig:partialex:2}
	}\hspace*{0.7cm}
	\subfigure[]{
		\begin{tikzpicture}[node distance=0.6cm]			
			\node[args](args1){$\cA$};
			\node[args, right=of args1](args2){$\cB$};
			\node[args, below=of args2](args3){$\cC$};
			\path(args1) edge [->] (args2);
			\path(args3) edge [->] (args1);
			\path(args2) edge [->] (args3);
		\end{tikzpicture}
		\label{fig:partialex:3}
	}
	\caption{Argumentation frameworks for Example~\ref{ex:partialex}}
	\label{fig:partialex}
	\end{center}
\end{figure}
We continue with some examples to illustrate the definitions and to investigate some of our concerns in dealing with partial assignments.

\begin{example}\label{ex:partialex}
	For the argumentation framework depicted in Figure~\ref{fig:partialex}(a) consider $\pi_{1}$ with $\pi_{1}(\cA)=1$. Obviously, the most reasonable choice for a $\pi_{1}$-compliant $P\in\allProb{\AF}$ would be $P(\cA)=1$ and $P(\cB)=0$ (by obeying the property of involution). Furthermore, for $\pi_{2}(\cB)=0.3$ we would have $P(\cA)=0.7$ and $P(\cB)=0.3$ following the same rationale.
	
	For the argumentation framework depicted in Figure~\ref{fig:partialex}(b) consider $\pi_{3}$ with $\pi_{3}(\cC)=0.4$. A possible choice for $P$ would be $P(\cA)=0.6$, $P(\cB)=0.4$, and $P(\cC)=0.4$ (having thus a maximally committed function that is coherent). But note that the set $\allProbComp{\AF}{\pi}\cap \allProbT{\textsf{COH}}{\AF}$ does contain more than this single probability function. Furthermore, for $\pi_{4}$ with $\pi_{4}(\cB)=0.7$ and $\pi_{4}(\cC)=0.6$ one would only guess $P(\cA)\leq 0.3$ but due to the ``inconsistency'' of $\pi_{4}$ (violating the coherence condition), what is the best choice?
	
	For the argumentation framework depicted in Figure~\ref{fig:partialex}(c) consider $\pi_{5}$ with $\pi_{5}(\cA)=0.4$ and the following four selections $P_{1},P_{2},P_{3},P_{4}\in\allProb{\AF}$:
	\begin{align*}
		P_{1}(\cA) & = 0.4	&	P_{2}(\cA) & = 0.4		& P_{3}(\cA) & = 0.4		& P_{4}(\cA) & = 0.4\\
		P_{1}(\cB) & = 0.6	& 	P_{2}(\cB) & = 0.4			& P_{3}(\cB) & = 0.5	& P_{4}(\cB) & = 0.2\\
		P_{1}(\cC) & = 0.4	& P_{2}(\cC) & = 0.6			& P_{3}(\cC) & = 0.5				& P_{4}(\cC) & = 0.3
	\end{align*}
	All of the above probability functions are $\pi_{5}$-compliant and coherent. Function $P_{4}$ is not maximally committed and as such is perhaps not a good choice. Both $P_{1}$ and $P_{2}$ are ``extreme points of view'' and model some kind of probabilistic stable semantics. The function $P_{3}$ is as unbiased as possible but still ``reasonable'' as it models probabilistic grounded semantics. Note that $P_{3}$ is also the probability function with maximal entropy in $\allProbComp{\AF}{\pi}\cap \allProbT{\textsf{COH}}{\AF}$.
\end{example}
Given $\allProbComp{\AF}{\pi}$ and $\allProbT{t}{\AF}$, we can either select $P\in\allProbComp{\AF}{\pi}$ that is ``as close as possible to'' $\allProbT{t}{\AF}$ or $P\in \allProbT{t}{\AF}$ that is ``as close as possible to'' $\allProbComp{\AF}{\pi}$. In future work, we will investigate definitions for  ``as close as possible to'', and we will explore the pros and cons of each of these alternatives for selecting $P$.

%%%%%%%%%%%%%%%%%%%%%%%%%%%%%%%%%%%%%%%%%%%%%%%%%
\section{Discussion}\label{section:discussion}
%%%%%%%%%%%

In this paper, we have investigated the use of a probability function to represent belief in an argument. We use this to identify an epistemic labelling, and thereby an epistemic extension. 
The notion of an epistemic extension is very general as there are no constraints on the probability function in general. However,  we have considered various constraints on the probability function leading to  two views on using the probability functions, namely the standard view, and the non-standard view.  Many of the constraints on the probability function that we have investigated take into account aspects of the structure of the argument graph.
We applied our classification of properties of probabilistic argumentation to the problem of completing partial probability assignments. A first investigation leads us to believe that maximizing entropy within probability functions of a specific type gives appropriate results for this problem. In future work, we will investigate this issue in more depth.

The work in this paper contrasts with other research on introducing a probability assignment to arguments such as \cite{LON11}. There, a probability distribution over the subgraphs of the argument graph is introduced, and this can then be used to give a probability assignment for a set of arguments being an admissible set or extension of the argument graph. They assume independence between arguments which in general is not appropriate. To address this shortcoming, the set of spanning subgraphs can be used as a sample space, thereby obviating the need for an independence assumption between arguments \cite{Hun12b}.  This probability distribution over subgraphs is used to give a probability assignment to extensions. For a semantics $X$ (such as grounded, preferred, or stable), the probability that a set of arguments $\Gamma$ is an extension according to semantics $X$, denoted $P_X(\Gamma)$, is the sum of the probability assigned to the subgraphs for which $\Gamma$ is an extension according to semantics $X$. The uncertainty that is being handled in  \cite{LON11,Hun12b} is about the structure of the graph, and it is therefore a different kind of uncertainty model to the being addressed by this paper. Similarly, the work \cite{Janssen:2008} extends abstract argumentation by allowing the attack relation to be a \emph{fuzzy relation}. 

%%%%%%%%%%%%%%%%%%%%%%%%%%%%%%%%%%%%%%%%%%%%%%%%%
%%%%%%%%%%%%%%%%%%%%%%%%%%%%%%%%%%%%%%%%%%%%%%%%%
\bibliographystyle{alpha}
\bibliography{probarg_ee}

%%%%%%%%%%%%%%%%%%%%%%%%%%%%%%%%%%%%%%%%%%%%%%%%%
%%%%%%%%%%%%%%%%%%%%%%%%%%%%%%%%%%%%%%%%%%%%%%%%%

\end{document}